\documentclass[letterpaper]{article} 
\usepackage[preprint]{aaai2027}  
\usepackage[hyphens]{url}  
\usepackage{graphicx} 
\usepackage{natbib}  
\usepackage{caption} 
\usepackage{algorithm}
\usepackage{algorithmic}
\usepackage{amsmath}
\usepackage{amsfonts}
\usepackage{amssymb}
\usepackage{amsthm}
\newtheorem{theorem}{Theorem}
\newtheorem{proposition}{Proposition}
\newtheorem{lemma}{Lemma}
\newtheorem{corollary}{Corollary}
\newcommand{\R}{\mathbb{R}}
\newcommand{\norm}[1]{\left\lVert #1 \right\rVert}

\newcommand{\sgn}{\operatorname{sgn}}
\newcommand{\sr}{\operatorname{sr}}

\usepackage[capitalise,noabbrev]{cleveref}
\usepackage{newfloat}
\usepackage{listings}
\DeclareCaptionStyle{ruled}{labelfont=normalfont,labelsep=colon,strut=off} 
\floatstyle{ruled}
\newfloat{listing}{tb}{lst}{}
\floatname{listing}{Listing}

\usepackage{booktabs}

\title{Detecting CSAM Text-to-Image LoRAs From Weights}
\author{
    David Demitri Africa\thanks{First author.},
    Cate Heine,
    Nadine Staes-Polet,
    Kimberly Mai\corresponding
}
\affiliations{
    UK AI Security Institute\\

    \texttt{Kimberly.Mai@dsit.gov.uk}
}

\begin{document}

\maketitle

\begin{abstract}
Low-rank adaptation (LoRA) fine-tuning has made it cheap and easy to customize open-weight image generation models for specific tasks, including the production of child sexual abuse material (CSAM). Existing moderation relies on metadata or generated outputs, but metadata can be deceptive and generating outputs may itself be unacceptable or illegal. We show that a safer signal lives in the weights. The top-left singular vectors of a LoRA's updates form a compact, inference-free fingerprint ($u_1$) of its strongest learned change. Using human-subject age as a benign proxy for CSAM, we find that $u_1$ identifies what a LoRA was trained on, generalizes across base models, and abstains on unrelated benign content. The signal is robust to additive weight noise, rescaling, and precision reduction. These results indicate that harmful LoRAs could be screened directly from their weights without relying on metadata or generating harmful outputs. \end{abstract}


\begin{figure*}[t]
\centering
\includegraphics[width=\textwidth]{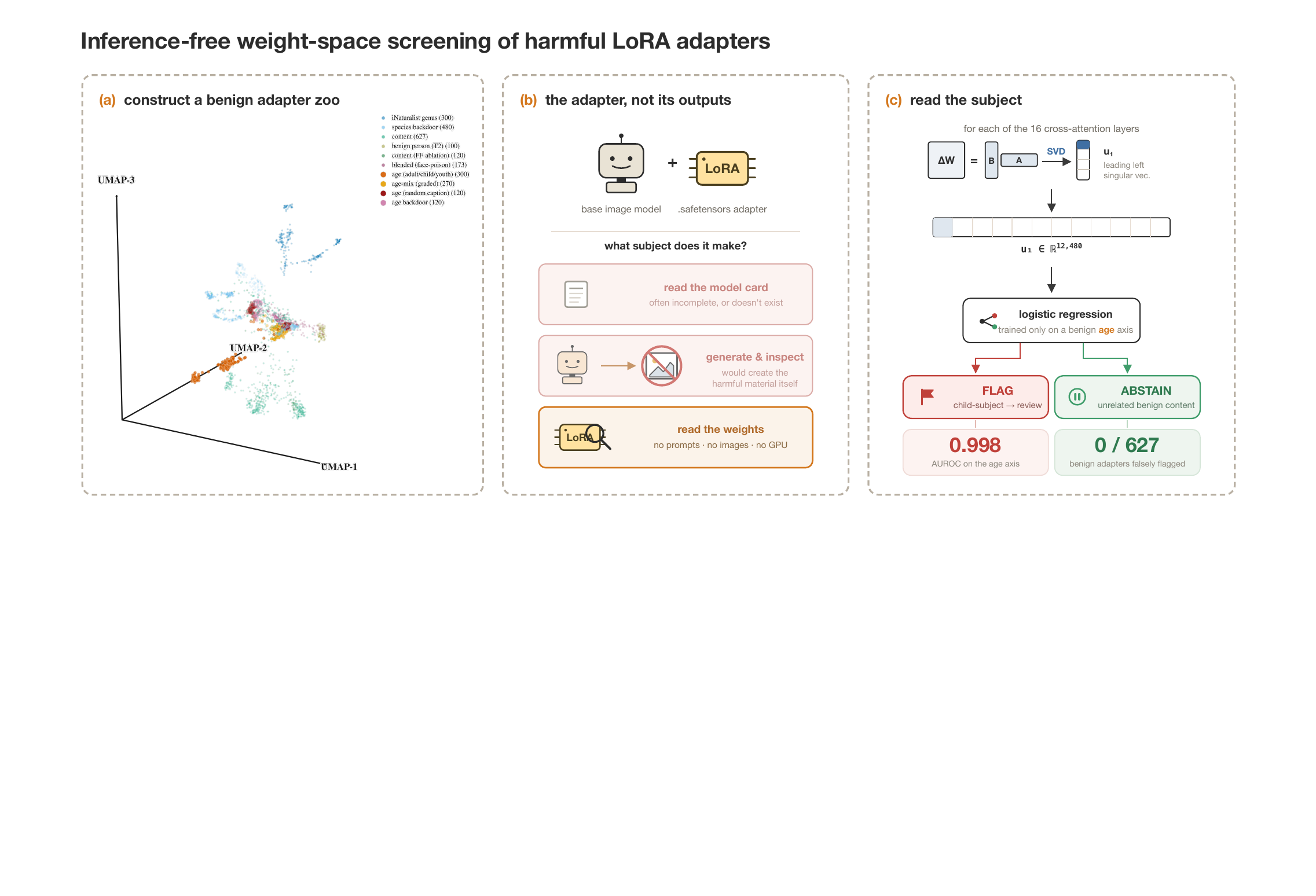}
\caption{\textbf{Overview of weight-space screening.} We construct a zoo of benign adapters, which vary along a chosen property we want to screen for. For example, adapters trained on the faces of apparent age, a benign proxy for child-related content in CSAM. A LoRA adapter's cross-attention updates $\Delta W=BA$ yield the leading singular direction $u_1$, which a light proxy-supervised classifier maps to a flag/abstain decision. This classifier works without running prompts, generating images, or requiring GPUs.}
\label{fig:overview}
\end{figure*}

\section{Introduction}

Low-rank adaptation (LoRA; \citet{hu2022lora}) has become a standard way to customize open-weight text-to-image models. A LoRA modifies a frozen base model through a small set of low-rank weight updates, allowing users to specialize a model to a subject, style, identity, or content type without requiring the retraining of all model parameters. This design has clear benefits: LoRAs are cheap to train, easy to host, and widely useful for various legitimate creative applications. The same properties also make them easy to misuse.

In this work, we study one especially high-stakes moderation problem---LoRAs that facilitate the generation of child sexual abuse material (CSAM): visual content depicting the sexual abuse or exploitation of children, including photos, videos, livestreams, and AI-generated images \citep{home_office_2026,protect_children_2026, iwf_2026, murgia_2026}. Public model-hosting platforms typically rely on model cards, tags, user reports, and output inspection to moderate such adapters. However, these descriptions can be misleading, and in private or peer-to-peer settings, they may not exist at all. Inspecting generated outputs is not viable when the suspected content is CSAM, as this may be illegal or unacceptable \citep{iwf_2026}. We then ask: how can harmful adapters be identified without trusting user-provided descriptions or generating harmful outputs?

Automated, image-free assessment of fine-tuned adapters for harmful capability is an open problem in AI child safety \citep{suriyakumar2026evaluation, kale2026position}. A growing body of work shows that narrow fine-tuning leaves legible traces of its objective in a model's internals \citep{minder2025narrow, goel2025learning, shenoy2026introspection}, and that model weights can also be analyzed directly over a diverse population of neural networks (referred to henceforth as model zoos; \citet{schürholt2022modelzoosdatasetdiverse}) to detect properties such as backdoors \citep{merenciano2026weightspacedetectionbackdoors}.

We ask whether a LoRA's purpose is encoded legibly in its weights. For each LoRA update, we extract the top-left singular vector and concatenate these vectors across layers. We call this representation $u_1$ (\Cref{fig:overview}). It is computed directly from the adapter weights, without requiring any prompts, generated images, model execution, or heavy computational resources. Because the target cannot be studied directly, we work entirely with benign proxies, treating the age of human subjects as the closest legally and ethically testable proxy for the child-related axis, which is consistent with the established use of benign proxy concepts in child-safety research \citep{kale2026position, cretu2025concept}. Our contributions are as follows:
\begin{itemize}
    \item \textbf{A survey of image-generation LoRAs in the wild.} We analyze a sample of LoRAs in the wild and show that metadata is frequently incomplete, heterogeneous, and attacker-controlled. These results motivate weight-space analysis as an independent moderation signal. 
    \item \textbf{An inference-free method for weight-space screening.} We introduce $u_1$, a representation formed from the leading singular directions of LoRA updates, and show that it recovers training subjects across datasets and base models without prompts, image generation, or GPU inference.
    \item \textbf{Evidence that hidden capabilities leave detectable traces in adapter weights.} Even when the target attribute is hidden behind trigger-based backdoors, the training subject remains partially recoverable from weight space. This suggests that adapter weights reveal information unavailable from metadata or ordinary prompting.
\end{itemize}

\section{Background and Related Work}

\paragraph{Low-rank adaptation.} Low-rank adaptation \citep{hu2022lora} freezes a pretrained weight matrix $W$ and learns a low-rank update $\Delta W = BA$ with $B \in \mathbb{R}^{d_\mathrm{out}\times r}$, $A \in \mathbb{R}^{r\times d_\mathrm{in}}$ and rank $r \ll d$, so that only a few megabytes of parameters need to be stored per task. We treat each adapter's parameters as the object of study. This setup relates to a line of work that operates directly on trained weights, including learned hyper-representations of model zoos \citep{schurholt2021selfsupervised, schürholt2022modelzoosdatasetdiverse, schurholt2024sane}. Recent methods read LoRA weights for specific downstream tasks: for example, \citet{liu2024lora} embed flattened LoRA matrices with PCA to retrieve artistic style from weights alone and \citet{duszenko2025towards} use summary statistics over adapters for interpretation.

\paragraph{Backdoors and stealth in adapter fine-tuning.} LoRA adapters can hide harmful capabilities behind trigger conditions while appearing benign under ordinary use. Subject-driven personalization methods such as DreamBooth bind a concept to a rare token through a low-rank update, and regularization can preserve innocuous behavior when the trigger is absent \citep{ruiz2023dreambooth,hu2022lora}. More recent attacks optimize for stealth by associating malicious behavior with ordinary words rather than obvious trigger tokens \citep{lyu2026masqlora}, so that trigger tokens and model cards may not reflect the underlying hidden capability. These adapter-level attacks sit within a broader line of diffusion backdoors, such as denoiser poisoning, multimodal data poisoning, and text-encoder manipulation \citep{chou2023baddiffusion,chen2023trojdiff,zhai2023badt2i,struppek2023rickrolling}. 

\paragraph{Detecting CSAM and AI-generated CSAM.} The accelerated development of text-to-image generation capabilities, and of LoRA fine-tuning in particular, has made AI-generated CSAM an urgent problem. Outputs are increasingly indistinguishable from photographic material \citep{kokolaki2025unveiling,ociardha2026harm}, harmful adapters circulate as a tradeable commodity \citep{jamaludin2026propagation, hawkins_2025}, and even unmodified base models carry non-trivial unintentional-generation potential \citep{goller2026systematic}. Existing defenses largely operate at the training-data or output layer and require an image to inspect. At the training-data layer, data curation can be circumvented by harmful fine-tuning, which re-introduces capabilities data curation seeks to exclude \citep{caetano2025neglected, laranjeira2022seeing, cretu2025concept}. At the output layer, perceptual hashing against known-content databases cannot flag novel AI-generated material \citep{thiel2023generative, kale2026position}. In addition, output classifiers or signature methods that attribute outputs to a source model are hard to validate, as benchmarks are rarely disclosed \citep{kale2026position}. Furthermore, defenses that require a generated image output risk producing CSAM. 

Closest to our setting are methods that assess a model without generating images. \citet{suriyakumar2026evaluation} probe the model's activations across many sampled inputs. Our approach analyzes the LoRA's weights directly, identifying what it was trained on without inference.

\label{sec:backdoor-related}

\section{LoRAs in the Wild}\label{sec:wild}

LoRAs are produced by a heterogeneous community, whose motivations range from benign customization to the deliberate creation of sexual content \citep{mink2026unlimited}. We characterize the population of LoRAs currently present on a large model-hosting platform in order to determine how much can be learned about an image-generation LoRA from descriptions alone. We scrape model card data (description, title, tags, base model) as well as safetensors metadata describing training configuration and captioning behavior for $63{,}288$ image-generation LoRAs from the platform.

\paragraph{Metadata quality varies substantially across LoRAs and is largely determined by the training software used.} The majority of the LoRAs we scraped contain signals in their metadata that they have been trained with the assistance of one of two popular training softwares: \texttt{Software A} and \texttt{Software B}. LoRAs trained with \texttt{Software A} (21\%) contain a training header with detailed configuration information including optimizer, learning rate, image dataset information, and frequency of captions and trigger words used in training. LoRAs trained using \texttt{Software B} (45\%) record much less data: just the base model, a step/epoch count, and a single trigger token. The remaining LoRAs are mixed across software used and granularity of metadata reported. Consequently, two otherwise similar LoRAs can reveal very different amounts of information before deployment. We describe base model, rank and target modules of LoRAs for which this information is available in \Cref{tab:arch}.

  \begin{table}[t]
  \centering\small
  \begin{tabular}[t]{@{}lr@{}}
  \toprule
  Field / value & \% \\
  \midrule
  \textbf{LoRA rank} & 91.5 \\
  \quad $\le$2  & 4.8 \\
  \quad 4       & 17.5 \\
  \quad 8       & 3.3 \\
  \quad 16      & 47.4 \\
  \quad 32      & 14.8 \\
  \quad 64      & 5.2 \\
  \quad 128     & 1.7 \\
  \quad other   & 5.3 \\
  \midrule
  \textbf{Target modules} & 92.4 \\
  \quad attention + FF/proj & 64.2 \\
  \quad FF/proj only        & 20.5 \\
  \quad attention only      & 13.1 \\
  \quad incl.\ convolution  & 2.2 \\
  \bottomrule
  \end{tabular}
  \hfill
  \begin{tabular}[t]{@{}lr@{}}
  \toprule
  Field / value & \% \\
  \midrule
  \textbf{Base-model family} & 98.1 \\
  \quad FLUX       & 68.7 \\
  \quad SDXL       & 18.5 \\
  \quad SD-1.5      & 4.0 \\
  \quad SD2+       & 2.0 \\
  \quad Qwen-Image & 1.5 \\
  \quad Z-Image    & 1.4 \\
  \quad Wan        & 1.7 \\
  \quad Other      & 2.1 \\
  \bottomrule
  \end{tabular}
  \caption{Architecture and base model among the
  {63{,}288}-adapter accessible corpus. Rank and target modules are derived
  from safetensors weight files, and base model family from the model card; these
  are observable for 91.5\%, 92.4\% and 98.1\% of the
  corpus, respectively. We group FLUX variants together; SDXL fine-tunes such as
  Pony and Illustrious grouped into SDXL; SD~2.x/3.x grouped as
  SD2+; Wan video variants grouped as Wan.}
  \label{tab:arch}
  \end{table}
\paragraph{Detailed training metadata can reveal common backdoor strategies.} The backdooring methods described in the previous section align with several strategies visible in \texttt{Software A} metadata, such as the use of a rare alphanumeric trigger word, that trigger being used on only a minority of images, regularization or prior preservation, and training of the text encoder. For the 21\% of our sample which were trained with \texttt{Software A} and contain detailed training recipes, we characterize the prevalence of these strategies in \Cref{tab:backdoor}. Each is individually uncommon, and only 22 adapters (0.2\%) exhibit the full signature of a rare trigger combined with regularization, suggesting this overt, metadata-legible form of backdooring is rare in our sample. Regardless, the fact that rich training metadata is available for fewer than one quarter of LoRAs motivates methods that extract signals directly from LoRA weights.

\begin{table}[t]\centering\small
  \begin{tabular}{@{}lr@{}}
  \toprule
  \textbf{Backdoor-aligned strategy} & {\%} \\
  \midrule
  {Rare trigger word}                       & {12.4} \\
  {\quad --- on a minority of images} & {1.9} \\
  {Regularization / prior preservation}              & {1.6} \\
  {Text-encoder training}                            & {49.5} \\
  {Rare trigger \emph{and} regularization}           & {0.2} \\
  \bottomrule
  \end{tabular}
  \caption{{Backdoor-aligned strategies among \texttt{Software A} adapters with
  detailed recipes in the accessible corpus ($n{=}13{,}224$). Here, rare
  trigger refers to known trigger tokens used for subject-driven personalization
  such as ``tok" and ``sks" as well as alphanumeric trigger words such as
  ``f10wer3." Only 22 adapters (0.2\%) combine a rare trigger with
  regularization.}}
  \label{tab:backdoor}
  \end{table}

These results suggest that metadata alone is an insufficient basis for moderation. Descriptions can be incomplete or attacker-controlled, and even detailed training metadata rarely exposes stealth behavior. This motivates screening methods that rely on information contained in the adapter itself rather than its accompanying metadata. We therefore turn to the LoRA weights to explore moderation interventions.

\section{Weight-space Detection}

The previous section showed that metadata is incomplete and attacker-controlled. We therefore ask whether adapter weights provide an independent signal that requires neither the model's description nor its outputs. Instead of using the raw weights, we define a lower-dimensional weight-space representation of a LoRA, $u_1$.

\paragraph{Representation of $u_1$.} 

For each LoRA update $$\Delta W = BA,$$ we compute the top-left singular vector of $\Delta W$. For LoRAs adapting different target modules, we use the intersection of cross-attention layers present across the zoo. We concatenate aligned $u_1$ vectors across layers and train simple classifiers on the resulting feature. Further implementation details regarding extraction and layer selection are detailed in the Appendix.

\paragraph{Motivation.} We briefly motivate reading $u_1$ using key concepts from linear algebra\footnote{Interested readers may look to the Appendix for further details, including the referenced propositions and theorems.}. A LoRA trained to produce a specific subject must decide when the learned
concept is relevant (using some cue from its input), and it must decide what direction to inject into the network's hidden state once that cue is present. From this, we can take a simple model of a subject update, where a cue in the input
$a\in\R^{d_{\mathrm{in}}}$ gates a direction in the output pertaining to the subject 
$w\in\R^{d_{\mathrm{out}}}$,
\begin{equation}
    \Delta W=\gamma\,w a^\top,
    \qquad
    \Delta W x=\gamma(a^\top x)w .
\end{equation}
Here $a$ controls when the update fires, while $w$ is the direction emitted into
the module output. In this idealized case, the leading left singular direction
recovers the emitted subject direction exactly,
\begin{equation}
    [u_1(\Delta W)]=[w],
\end{equation}
regardless of how strong the update $\gamma$ is and independent of the norm of the cue
$a$ (see the Appendix). Hence, we use the \emph{left} singular direction, as we are aiming to capture what the LoRA is trying to add to the denoiser, or the output.

This simplified model explains why the method might be robust. If a practitioner rescales the LoRA, changes the trigger word, or saves the weights at a different level of precision, the cue side or magnitude might change, but the subject direction should stay the same if the LoRA is to work as intended. Since $u_1$ is a normalized direction, rather than a raw magnitude, it should survive those changes (see Appendix).

\paragraph{Training dataset.} We do not train on harmful adapters. Instead, we train controlled benign LoRA zoos where each adapter has a known non-harmful subject, to test whether the training subject is legible from weights and whether a detector can be trained on proxy axes relevant to safety.

The central deployment idea is proxy supervision \citep{vcermak2017learning}. For a sensitive target, a safety team may train benign adapters along a legally and ethically permissible axis, such as age or visual domain, then use the resulting weight-space detector to screen uploaded adapters, without training on the harmful conjunction itself.

We train controlled zoos across several benign datasets and base models. The main scaled benchmark is built on SD-1.5 \citep{Rombach_2022_CVPR} and contains 957 LoRAs across seven subject categories, summarized in \Cref{tab:zoos}. Each adapter is trained on a single concept drawn from a curated open dataset: faces grouped by apparent age, iNaturalist genera, and five content domains spanning birds, scenes, objects, textures, and land cover. Sourcing one concept per adapter from curated open datasets and using synthetic images for the age axis keeps the benign manifold free of nonconsensual or unsafe content that broader, uncurated web-scraped datasets can introduce. Among these, the age zoo is most safety-relevant: discriminating child- from adult-subject adapters along the age axis jointly measures sensitivity (correctly identifying child-subject adapters) and false positive rate on the hardest negatives (adult-subject adapters). 

\begin{table}[h]
\centering
\small
\resizebox{\linewidth}{!}{
\begin{tabular}{@{}llr@{}}
\toprule
Category & Source dataset & $N$ \\
\midrule
age (adult/child/youth) & AI-Face \citep{lin2025aiface} & 300 \\
iNaturalist genus        & iNaturalist 2021 \citep{vanhorn2021inat} & 300 \\
birds                    & CUB-200-2011 \citep{wah2011cub} & 100 \\
scenes                   & SUN397 \citep{xiao2010sun} & 100 \\
objects                  & Caltech-101 \citep{feifei2004caltech101} & 100 \\
textures                 & DTD \citep{cimpoi2014dtd} & 47 \\
land cover               & EuroSAT \citep{helber2019eurosat} & 10 \\
\midrule
Total (SD-1.5 benchmark) & & 957 \\
\bottomrule
\end{tabular}}
\caption{The benign SD-1.5 LoRA benchmark: one concept per adapter, sourced from curated open datasets. Age is the safety-relevant proxy axis; the remaining six categories are diverse benign content.}
\label{tab:zoos}
\end{table}

Training hyperparameters are randomized across LoRAs, including learning rate, rank, optimizer, and caption regime. This variation reduces the risk that classifiers merely learn a fixed training recipe and extends beyond prior work on LoRA weights-based analysis \citep{liu2024lora, duszenko2025towards}.

We also train age and genus zoos for SDXL \citep{podell2024sdxl} and FLUX.1-schnell \citep{flux2024} to test cross-base-model robustness. As cross-attention dimensionality and layer counts differ across base models, we train and evaluate a separate detector per base model and report whether the method transfers, rather than pooling adapters from different bases into a single classifier. Results on SDXL and FLUX are similar to what we observe with SD-1.5, so we report these results in the Appendix.

\subsection{Baselines and evaluation}

We evaluate $u_1$'s ability to distinguish training subject using logistic regression and random forests with stratified cross-validation. For multiclass settings, we report macro one-vs-rest AUROC and accuracy. We select simple classifiers so performance reflects the utility of $u_1$ rather than the ability of the classifier. They are also more interpretable. 

We compare $u_1$ against activation-based Gaussian probing in earlier experiments in \citet{suriyakumar2026evaluation}. Gaussian probing loads each LoRA into the base model, runs diffusion trajectories from Gaussian latents under null text conditioning, hooks the U-Net mid-block activation, pools the activation, and trains classifiers on the resulting feature. Unlike $u_1$, Gaussian probing requires model execution and GPU inference. This baseline is defined for U-Net bases (SD-1.5, SDXL); FLUX.1-schnell has no U-Net mid-block, so we either define an analogous activation hook for its DiT blocks or restrict the activation comparison to the U-Net bases. 

For deployment-style calibration, we train an age-proxy classifier on benign face LoRAs and apply it to unrelated benign content LoRAs.
We would expect a deployed screen to see mostly adapters unrelated to age, so a detector trained on the age axis is useful only if it abstains on this content rather than mapping it confidently into a class. Otherwise, the screener would produce many false flags for benign content. We then measure maximum softmax probability, entropy, and the number of confident misclassifications above a fixed threshold. For unsupervised novelty baselines, we evaluate PCA reconstruction error and cosine nearest-neighbor scores in leave-one-superclass-out settings \citep{liu2024lora}, as well as classify over summary statistics \citep{duszenko2025towards}.

\subsection{Results}

\textbf{The safety-relevant axis is legible in weight space.} The axis most relevant to safety in the CSAM context is age. We isolate this on a controlled zoo in which every adapter is trained on faces from a single dataset (AI-Face) and varies in apparent age (child vs. youth vs. adult), with the training recipe (rank, learning rate, optimizer) randomized per adapter. A random forest classifier on $u_1$ recovers the three-way adult/child/youth axis at \textbf{0.993} macro one-vs-rest AUROC. Because all three classes are drawn from one face dataset, the recovery cannot be attributed to dataset-level image statistics (such as composition, resolution, or capture quality) that differ across content sources.

\textbf{$u_1$ recovers the training subject across diverse content and performs competitively with baselines.} On the 957-adapter SD-1.5 benchmark, a logistic-regression classifier on $u_1$ attains \textbf{0.976} macro one-vs-rest AUROC (random forest 0.939), with every category above 0.94 (\Cref{fig:perclass}). We find that the score is stable across resampling seeds ($0.976 \pm 0.0005$, 95\% CI), and falls to chance when labels and features are randomly permuted as negative controls ($0.50$ and $0.51$). This confirms the signal is from the weights and not from the classification pipeline. We also find that it is competitive with Gaussian probing without requiring any inference, and beats both summary statistics and PCA on both AUROC and balanced accuracy (\Cref{fig:baselines})\footnote{We minorly adapt the PCA baseline to be rank-invariant, so it is a fair comparison on our randomized-rank zoo. If one reproduces \citet{liu2024lora} exactly on the fixed-rank-$16$ subset, it gives $0.673$ AUROC.}.

\begin{figure}[h]
\centering
\includegraphics[width=\columnwidth]{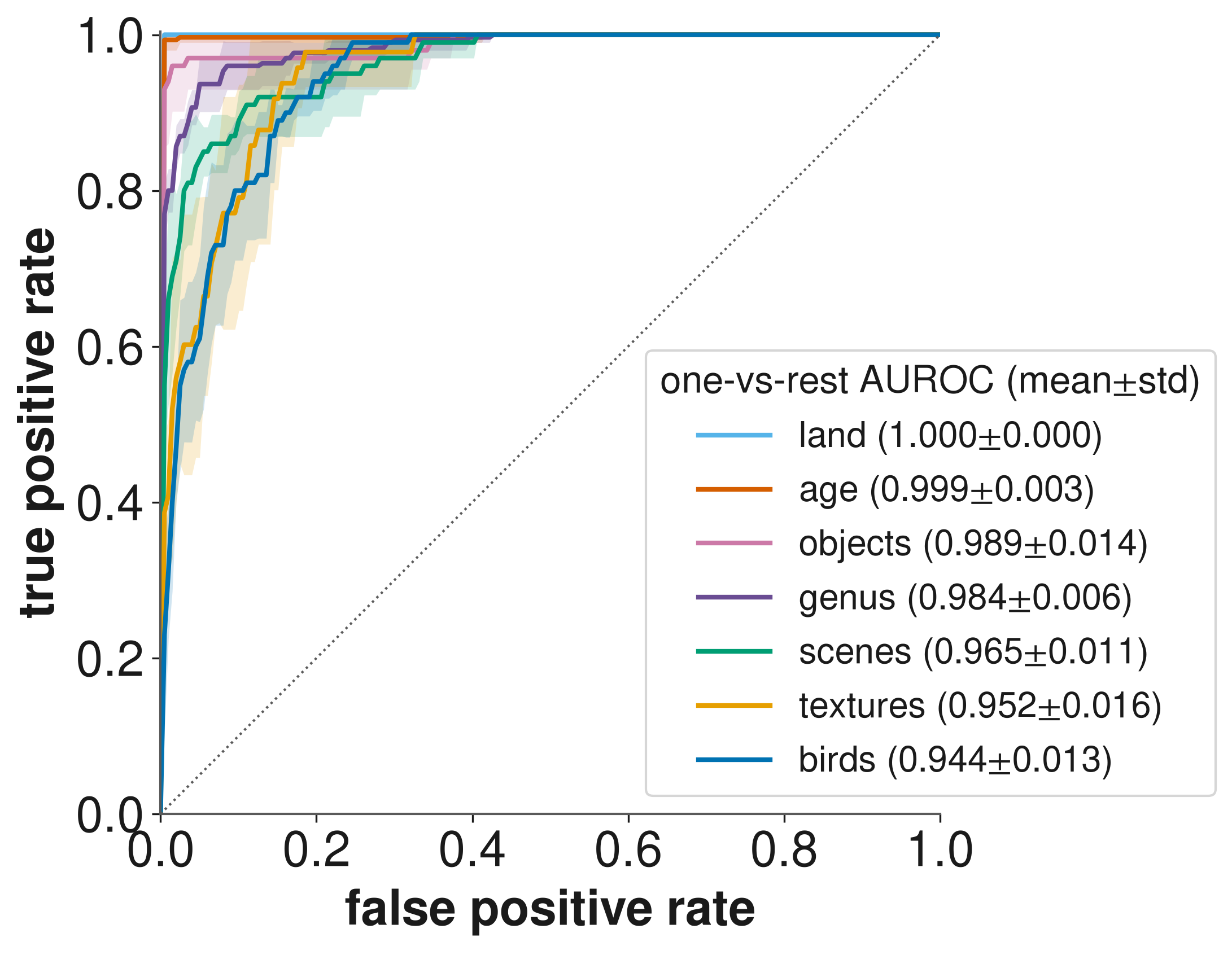}
\caption{\textbf{In-distribution one-vs-rest ROC per subject category (logistic regression on $u_1$, stratified CV).} AUROC in legend, macro 0.976 across seven categories.}
\label{fig:perclass}
\end{figure}

\begin{figure}[!h]
\centering
\includegraphics[width=\columnwidth]{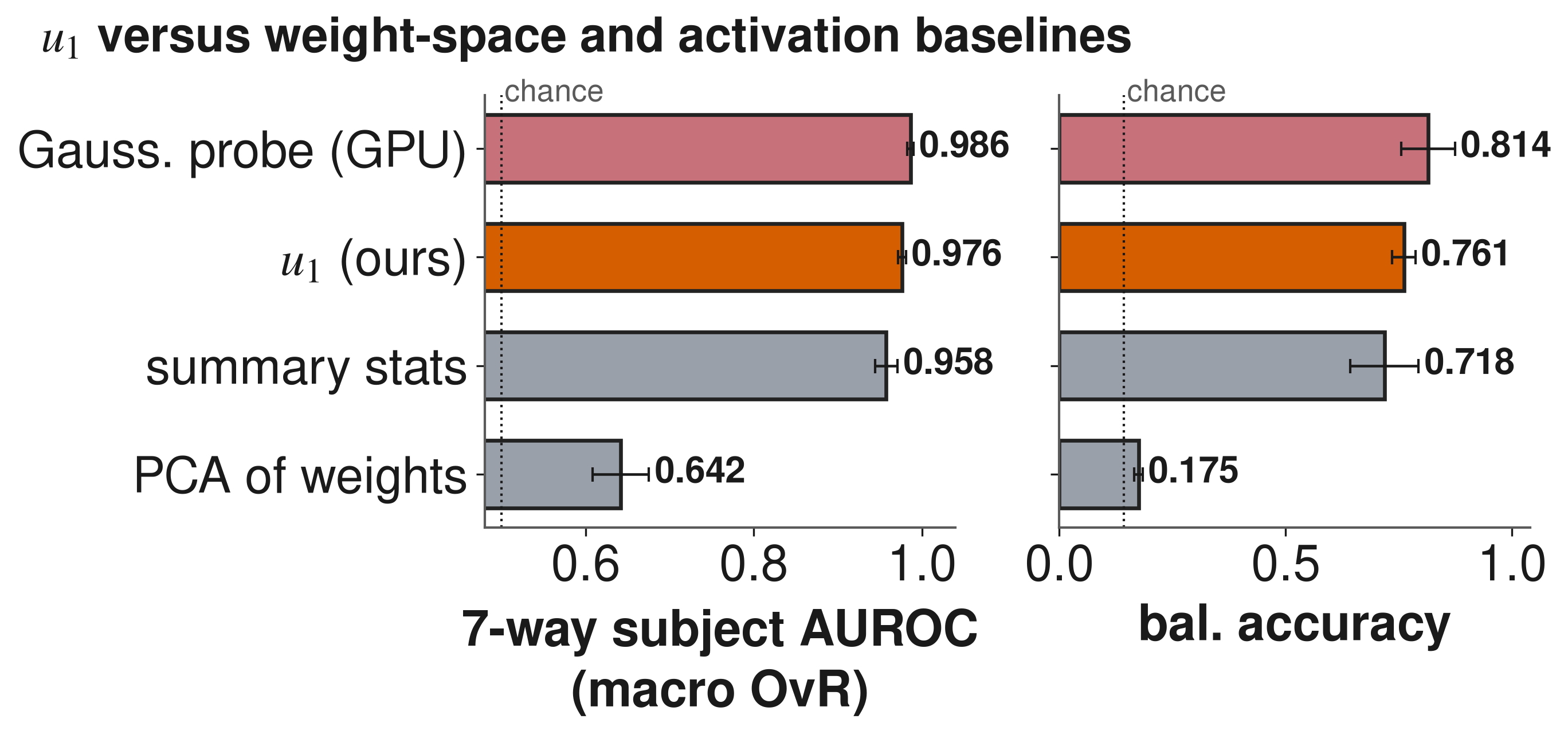}
\caption{\textbf{$u_1$ versus weight-space and activation baselines on the 957-adapter SD-1.5 benchmark.} 7-way subject; macro one-vs-rest AUROC and balanced accuracy, mean$\pm$std over CV folds; dotted lines are chance. $u_1$ matches the GPU Gaussian probe \citep{suriyakumar2026evaluation} while being inference-free, and beats the summary-statistics \citep{duszenko2025towards} and PCA-of-weights \citep{liu2024lora} baselines on both metrics.}
\label{fig:baselines}
\end{figure}

\paragraph{The features are subject-specific rather than recipe-specific.} If you instead use the probes to recover the per-adapter randomized recipe (so our labels are instead one of rank, learning rate, optimizer) from $u_1$ its performance falls to chance ($0.51$, $0.55$, $0.43$). This shows that $u_1$ encodes what the adapter learned, rather than how it was trained. It is also caption-invariant: subject recovery remains at $1.00$ AUROC when adapters are trained with randomized captions, compared with $0.99$ using real captions. The signal therefore reflects the learned visual subject rather than the caption template.

\paragraph{The geometry of $u_1$ is semantically meaningful.} Concatenating these layerwise directions gives one vector per adapter, and we refer to the resulting representation space as $u_1$-space. A given adapter's nearest neighbor by cosine similarity in $u_1$-space shares the adapter's subject category 89.6\% of the time, with precision@5 of 0.847. Balanced accuracy is lower than AUROC (0.761), likely because visually adjacent categories remain confusable under argmax even when they are separable one-vs-rest. This is an intuitive structure for errors to take: natural-image categories such as birds, scenes, textures, and genera mutually confuse, while age is the most separable category in the class-balanced confusion matrix.

\textbf{The direction of the update identifies the subject, not the magnitude.}
We vary only the weight-space feature read from each cross-attention update $\Delta W$ (\Cref{tab:ablation}). The leading left singular vector $u_1$ reaches 0.979 macro AUROC. Adding the second and third singular vectors raises performance only slightly, to 0.985, so one component captures almost all of the subject signal. Magnitude-only summaries perform worse, where the per-layer singular-value spectrum reaches 0.959, the per-layer Frobenius norm reaches 0.850, and mean-pooled $\Delta W$ reaches 0.872.

\begin{table}[t]\centering\small
\begin{tabular}{@{}lrr@{}}
\toprule
Weight feature (cross-attn $\Delta W$) & macro AUROC & dim \\
\midrule
top-3 left singular vectors & 0.985 & 37{,}440 \\
\textbf{$u_1$ (leading left sing.\ vector)} & \textbf{0.979} & 12{,}480 \\
singular-value spectrum (top 8/layer) & 0.959 & 128 \\
mean-pooled $\Delta W$ & 0.872 & 12{,}480 \\
Frobenius norm (per layer) & 0.850 & 16 \\
\bottomrule
\end{tabular}
\caption{\textbf{Weight-feature ablation on the 957-adapter benchmark (7-way subject, identical 5-fold CV).} The leading singular \emph{direction} dominates magnitude-only summaries, and one component nearly suffices.}
\label{tab:ablation}
\end{table}

\textbf{Weight-space and activation-space views capture complementary signal.} We compare $u_1$ with Gaussian probing on two subsets: one where adapters differ only by apparent age, and one where they differ by animal genus. On both, $u_1$ outperforms Gaussian probing under the strongest probing configuration we ran (1{,}024 Gaussian latents $\times$ 30 DDIM steps): age is recovered at 0.993 vs.\ 0.967 AUROC, and genus at 0.987 vs.\ 0.860. On the full seven-category benchmark, the two views are closer: Gaussian probing reaches 0.986 macro AUROC, while $u_1$ reaches 0.976, and we find that Gaussian probing is stronger on natural-image categories such as birds, scenes, and textures, while $u_1$ is stronger on age, species, land-cover, and object axes. Concatenating or averaging the two views raises macro AUROC to \textbf{0.993--0.995}, showing that weight-space and activation-space carry partly independent signal, which practitioners should consider in designing production monitoring setups.

\textbf{Proxy-supervised detection abstains on benign content; unsupervised novelty does not suffice.} A screener in practice should be uncertain on samples unrelated to the classification task. Trained only on benign age adapters and applied to 627 unseen benign content adapters, the detector stays at chance (mean maximum probability 0.367 vs.\ a 0.333 prior; 99.6\% of maximum entropy) with \textbf{0/627} confident ($>$0.7) misclassifications; extending the off-distribution set to all 657 benign non-age adapters leaves nothing above threshold. This also extends to an age detector trained on SDXL face adapters, which produces \textbf{0/627} confident misclassifications (mean maximum probability 0.40 vs.\ the 0.33 prior), so the calibration is not an SD-1.5 artifact. The abstention is moreover specific to age rather than to faces: applied to 100 benign person adapters---which are faces, but carry no age label---the detector again produces \textbf{0/100} confident misclassifications (mean maximum probability 0.40 vs.\ the 0.33 prior). Unsupervised novelty detection on the same features flags distinctive content (textures 0.96, scenes 0.92) but is at chance for held-out age (0.49). Therefore, identifying the safety-relevant attribute (age) requires supervision, and because the supervised detector abstains on unrelated content, its confidence can prioritize LoRAs to review.

\textbf{The more child content an adapter was trained on, the more confidently the detector flags it as a child.} Abstention shows the detector does not over-fire off-axis; the complementary question is whether it tracks the axis \emph{as content approaches the target}. We test this with a graded zoo: 270 SD-1.5 adapters trained on adult/child image mixtures from 10\% to 90\% child, and the discrete adult/child/youth detector (that was not trained on a mixture) applied to each. Its child probability rises steadily with the child fraction of the training mixture, crossing the decision midpoint near a $55\%$ mix (Spearman $\rho=0.94$; \Cref{fig:transfer}). A detector supervised only on the discrete benign axis therefore generalizes in a graded way toward the safety-relevant end of that axis, rather than responding only to the exact training endpoints---direct, generation-free evidence for the proxy-supervision premise.

\begin{figure}[t]
\centering
\includegraphics[width=\columnwidth]{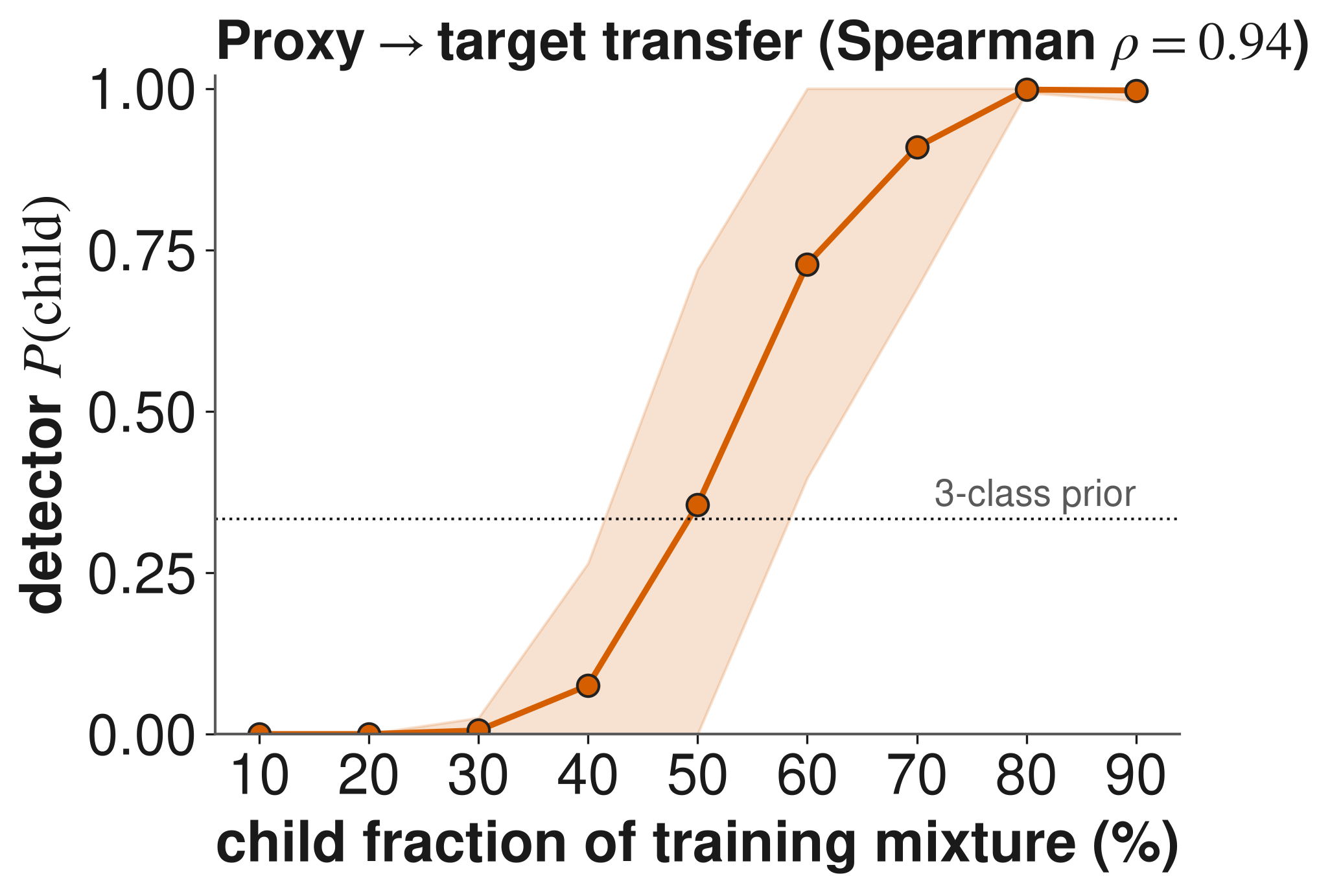}
\caption{Proxy$\to$target transfer. A discrete adult/child/youth detector, applied to graded adult--child mixture adapters it never saw, raises its child probability monotonically with the child fraction of the mixture (Spearman $\rho=0.94$); shaded band is $\pm1$ s.d.\ across the 30 adapters per level.}
\label{fig:transfer}
\end{figure}

\textbf{Robustness to weight-space evasion.} A motivated uploader could perturb an adapter to evade a weight-space screen, so we take target-axis (age) adapters, apply the perturbations such an adversary would use, and re-score them with a clean-trained detector. The $u_1$ signal is essentially unchanged under additive weight noise up to the magnitude of the weights themselves (AUROC $0.99\!\to\!0.98$ at noise scale $\sigma=1$, $0.97$ at $\sigma=2$), and is invariant to global rescaling and an fp16 round-trip (\Cref{tab:evasion-full-main}). 

\begin{table}[h]\centering\small
\begin{tabular}{@{}llc@{}}
\toprule
Perturbation & Strength & age AUROC \\
\midrule
(clean) & --- & 0.992 \\
\midrule
additive noise & $\sigma{=}0.25$ & 0.991 \\
               & $\sigma{=}0.5$  & 0.989 \\
               & $\sigma{=}1.0$  & 0.985 \\
               & $\sigma{=}2.0$  & 0.971 \\
rescale        & $\times0.25,\,\times4$ & 0.992 \\
fp16 round-trip & --- & 0.992 \\
8-bit quantize  & --- & 0.992 \\
4-bit quantize  & --- & 0.990 \\
norm-equalize   & --- & 0.992 \\

\bottomrule
\end{tabular}
\caption{\textbf{Full weight-space evasion sweep: age-detection AUROC of a clean-trained $u_1$ detector applied to perturbed target adapters.} The signal is invariant to rescaling, precision reduction, and norm-equalization, robust to additive noise up to the weight scale.}
\label{tab:evasion-full-main}
\end{table}

\begin{figure}[!h]
\centering
\includegraphics[width=1\columnwidth]{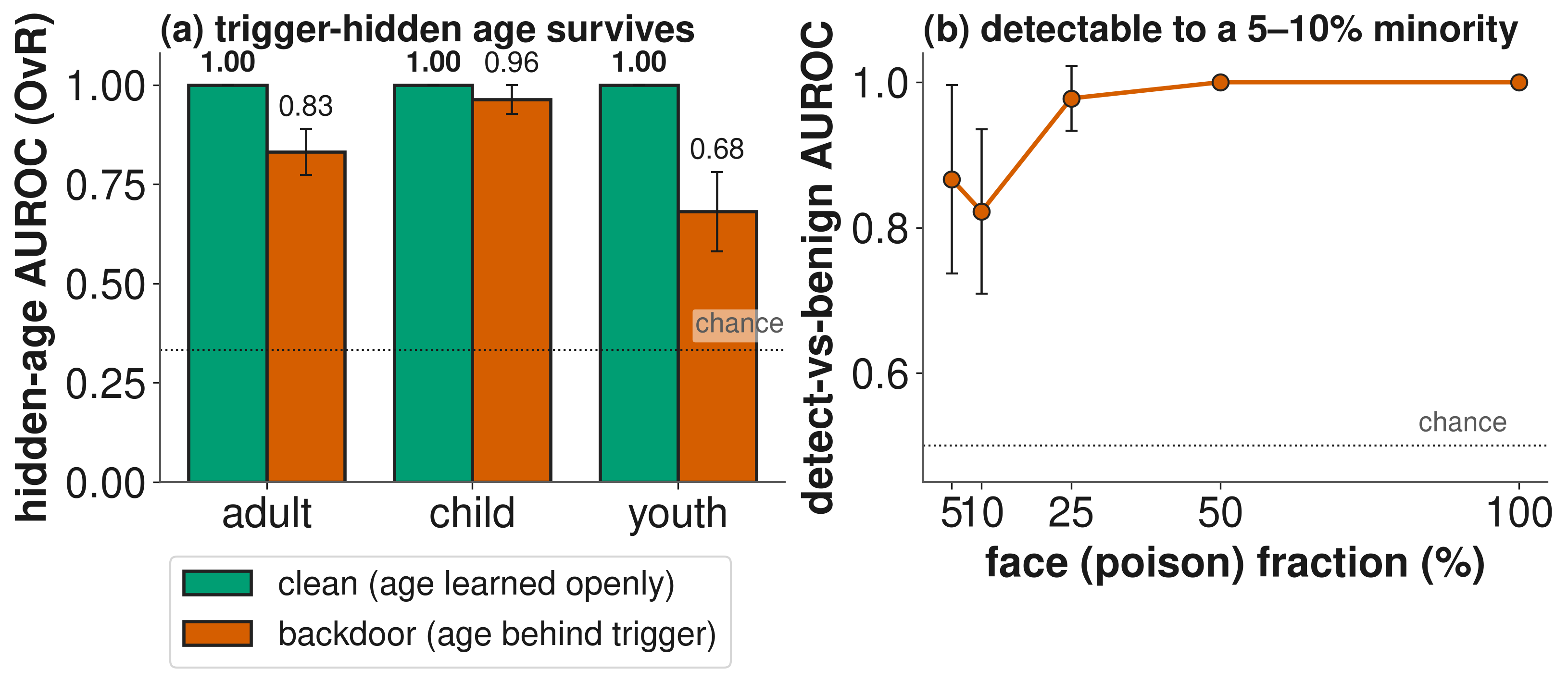}
\caption{\textbf{Recovery from stealth adapters.} \textbf{(a)} Per-age hidden-age AUROC: when age is learned openly (clean), $u_1$ recovers it perfectly; bound behind a trigger token (backdoor) it is still recovered well above the 0.33 chance floor, most strongly for the safety-critical child class. \textbf{(b)} Blended poison-fraction sweep: $u_1$ separates an adapter carrying a face (poison) minority from a pure-benign one down to a 5--10\% mix.}
\label{fig:stealth}
\end{figure}

\textbf{The signal survives a trigger-bound backdoor.} We consider an adapter that is designed to look benign, to produce certain subjects only when provided with a rare trigger, while regularized with innocuous imagery. We construct this case on the age axis, using 60 SD-1.5 adapters in which apparent age is learned only behind a DreamBooth-style trigger token (\texttt{sks person}) and regularized with a benign class \citep{ruiz2023dreambooth}, alongside 60 matched ``clean'' adapters that learn age openly. Within the clean arm, $u_1$ recovers the three-way age axis perfectly (macro one-vs-rest AUROC $1.00$). Within the backdoor arm, where age is never the adapter's nominal task, $u_1$ still recovers it at $0.83$ macro AUROC, far above the $0.33$ chance floor. The degradation is uneven and, for our purposes, favorable: the safety-critical \emph{child} class remains the most recoverable ($0.96$), while the adult/child boundary class (youth) is weakest ($0.68$, \Cref{fig:stealth}a). Separately, the backdoor and clean adapters are themselves perfectly distinguishable in weight space ($1.00$ AUROC). The sensitivity also holds when the poison is a small fraction of the data rather than a trigger: on a sweep that mixes face imagery into an otherwise benign adapter, $u_1$ separates the mixed adapter from a pure-benign one down to a $5$--$10\%$ face minority (AUROC $0.80$--$0.88$), saturating to $1.00$ by a $50\%$ mix (\Cref{fig:stealth}b).

\section{Limitations}\label{sec:limitations}

We test only on benign proxies. Although our detector abstains on broad benign content, it is not a guarantee of generalization to CSAM-generating LoRAs that include children and sexual content. While our method works across U-Net layers, we did not test our method on the text encoder; thus, understanding whether harmful LoRAs which modify the text encoder alone could be identified by our method is an open area for future work. In addition, our detectors are base-model-specific, though the methods transfer to the different models we test. Finally, our model zoos are balanced with one concept per adapter, whereas adapters in the wild may vary more in concept count, captioning, proportions, or include further operations like stacking or repeated fine-tuning.

\section{Ethics and Broader Impact}

We never train, fine-tune, or generate CSAM or any sexual-content-involving-minors model. Every adapter we train is a benign proxy or control. As such, we rely on benign proxies by design. We draw all training data from curated, safe-for-work open datasets, avoiding the nonconsensual collection practices that the community has been urged to abandon \citep{cintaqia2026stop}, and we build the age proxy from synthetic images rather than real people. Because the method operates on weights, it requires no human exposure to abusive imagery during development or deployment screening. This aims to address a wellbeing constraint that limits conventional red-teaming and moderation work \citep{kale2026position}. Given the sensitive subject matter, we nonetheless implemented a set of researcher wellbeing protocols, approved by our institutional ethics processes. Finally, we limit our most actionable findings. We report weaknesses only at the level defenders need and pair them with mitigations. We do not give the specific locations of the flagged in-the-wild LoRAs, and we withhold detailed attack and evasion methods --- including two additional evasion techniques that were moderately effective, whose details we omit to avoid evader uplift. This ensures the work informs defense without serving as a guide to the harmful content.

\section{Conclusion}

A LoRA encodes its training subject in the leading singular directions of its weight matrix, and that direction persists even when the update's magnitude changes. Reading it takes only linear algebra on the weights, so $u_1$ offers a way to screen harmful adapters from proxy supervision alone, with no prompts, generated images, or GPU inference. $u_1$ is data-efficient, abstains on unrelated benign content, and survives additive noise, rescaling, and precision reduction. It stays competitive with, and complementary to, activation probing while needing no inference, and we show it can recover subjects hidden behind trigger-based backdoors. More broadly, it reinforces that harmful adapters could be screened without generating or handling harmful content. Weight-space screening can serve as one layer of a moderation pipeline, alongside metadata review and perceptual hashing, and could flag harmful LoRAs off-platform, where no metadata exists. The results also suggest $u_1$ could support mitigation as well as detection, suppressing a concept in weight space rather than only flagging it, which is an avenue for future work \citep{arditi2024refusal}. We hope trust-and-safety and law-enforcement partners will evaluate and extend it in more realistic settings.

\section*{Acknowledgments}
The authors would like to thank various people for discussions, such as Punya Pandey, Rob Wang, Cameron Holmes, Konstantin Sietzy, Ashia Wilson, Vinith Suriyakumar, and many others whom we've missed due to our own forgetfulness. We would also like to thank the UK AI Security Institute and the Department of Science, Innovation, and Technology more broadly for their support. Our compute-intensive research was only made possible by the generous support of the Bristol Centre for Supercomputing, who provided access to Isambard.

\bibliography{aaai2027}

\appendix
\setcounter{secnumdepth}{2}

\section{Implementation and Experimental Details}
\label{app:implementation}

\paragraph{$u_1$ extraction.} As described in the main text, we extract $u_1$ using a reduced QR factorization, sign-align and concatenate the per-layer leading directions over the cross-attention matrices. We select cross-attention to follow the approach of \citet{duszenko2025towards}. The dimensionality and layers differ, so we build these separately for each base model architecture. Concretely, on SD-1.5 the shared matrices are the 16 value projections of the U-Net cross-attention blocks (6 encoder, 1 mid, 9 decoder), giving a 12,480-dimensional feature; on SDXL, 70 cross-attention matrices give 83,200 dimensions; and on FLUX.1-schnell, whose DiT has no U-Net, we read the 19 double-stream attention blocks for a 58,368-dimensional feature.

\paragraph{Classifiers and protocol.} Unless noted we use $\ell_2$-regularized logistic regression ($C{=}1$, standardized features) and report macro one-vs-rest AUROC under 5-fold stratified cross-validation; error bars are the standard deviation across folds, and the headline confidence intervals (\Cref{tab:perclass-full}) are the $95\%$ interval over eight CV reshuffles. \Cref{tab:clfsweep} shows the result is stable across classifier families.

\paragraph{Training zoos.} Each adapter is trained on a single concept with a distinct image subset drawn from a larger pool, under a per-adapter randomized recipe: rank $\in\{8,16,32\}$, learning rate $\in\{5{\times}10^{-5},10^{-4},2{\times}10^{-4}\}$, optimizer $\in\{\text{Adam},\text{AdamW}\}$, $100$ epochs, $512^2$ resolution, with light augmentation (random resized crop, horizontal flip, color jitter) to prevent replicate fingerprints. This randomization is exactly what the disentanglement control (\Cref{fig:disentangle}) exploits: the recipe is not recoverable from $u_1$. \Cref{tab:zoo-inventory} inventories every adapter zoo trained for this work ($\approx$4{,}730 single-concept adapters in total).

\begin{figure}[h]
\centering
\includegraphics[width=\columnwidth]{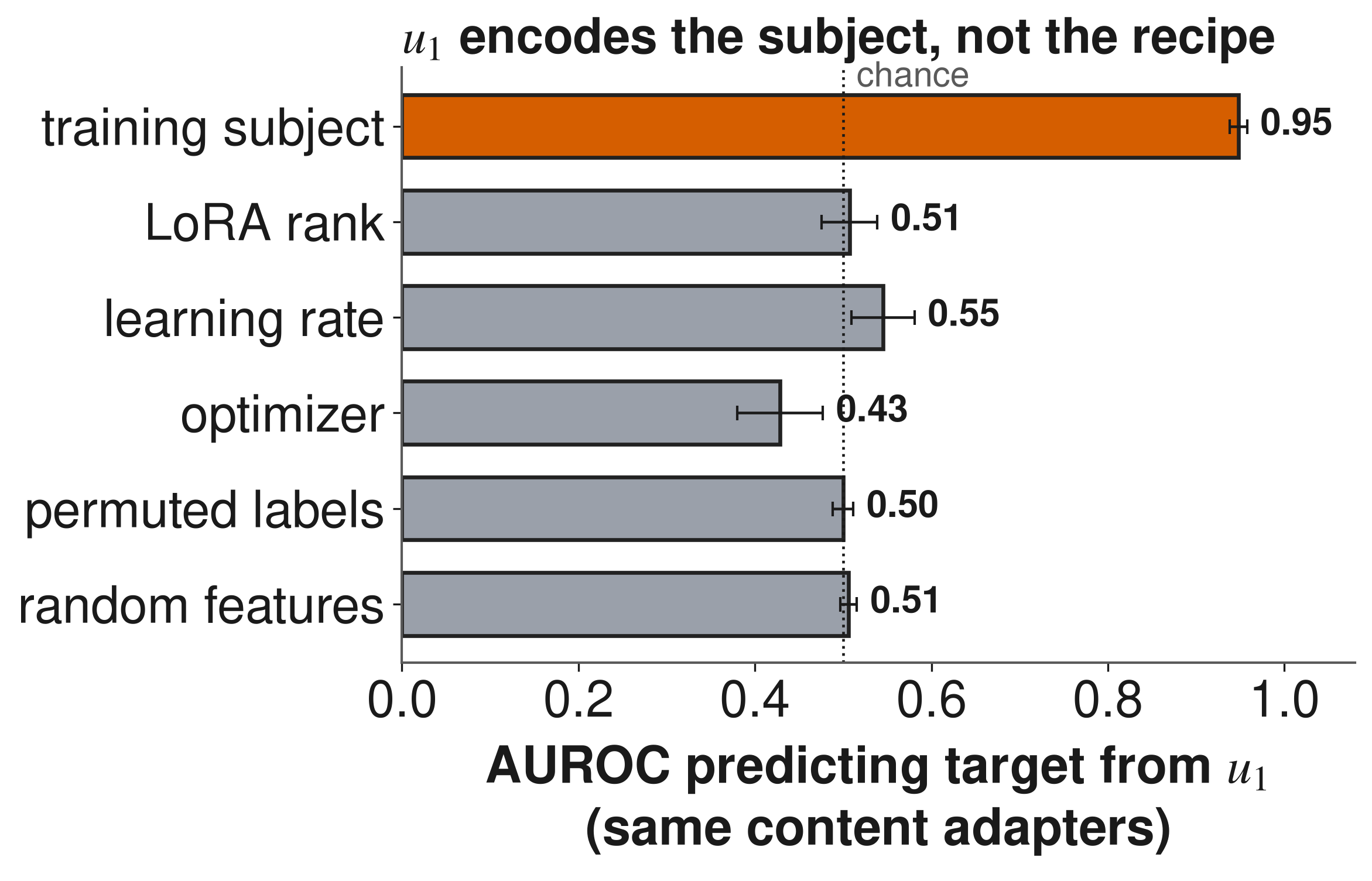}
\caption{\textbf{$u_1$ encodes the training subject, not the recipe.} AUROC of a probe predicting each target from $u_1$ on the \emph{same} content adapters: the subject is recovered at 0.95, while the per-adapter randomized recipe (rank, learning rate, optimizer) and the label-permutation / random-feature negative controls all sit at chance.}
\label{fig:disentangle}
\end{figure}

\begin{table}[t]\centering\small
\begin{tabular}{@{}llr@{}}
\toprule
Zoo / role & Base & $N$ \\
\midrule
\multicolumn{3}{@{}l}{\textit{SD-1.5 subject zoos (source of the 7-category benchmark)}}\\
content (birds/scenes/objects/textures/land) & SD-1.5 & 627 \\
age (adult/child/youth)                      & SD-1.5 & 300 \\
iNaturalist genus                            & SD-1.5 & 300 \\
\midrule
\multicolumn{3}{@{}l}{\textit{Cross-base (method transfer)}}\\
content           & SDXL & 627 \\
age               & SDXL & 293 \\
iNaturalist genus & SDXL & 300 \\
content           & FLUX.1-schnell & 627 \\
age               & FLUX.1-schnell & 150 \\
iNaturalist genus & FLUX.1-schnell & 300 \\
\midrule
\multicolumn{3}{@{}l}{\textit{Controls}}\\
random-caption (genus)            & SD-1.5 & 150 \\
random-caption (age, matched)     & SD-1.5 & 120 \\
benign person    & SD-1.5 & 100 \\
attn-vs-feed-forward ablation     & SD-1.5 & 120 \\
\midrule
\multicolumn{3}{@{}l}{\textit{Capstones (stealth / transfer)}}\\
age behind trigger (clean+backdoor) & SD-1.5 & 120 \\
species behind trigger              & SD-1.5 & 240 \\
blended poison-fraction sweep       & SD-1.5 & 90 \\
age-mix transfer (adult--child)     & SD-1.5 & 270 \\
\bottomrule
\end{tabular}
\caption{Adapters used per experiment. All are benign, single-concept, trained from curated open datasets.}
\label{tab:zoo-inventory}
\end{table}

\paragraph{Gaussian-probe baseline.} We reproduce activation probing \citep{suriyakumar2026evaluation} by loading each adapter into the base model, running DDIM denoising from $1{,}024$ Gaussian latents under null text conditioning ($30$ steps), mean-pooling the intermediate activation, and training the same classifiers. Unlike $u_1$ this requires GPU inference and a full denoising trajectory per adapter.

\paragraph{Compute.} Adapters were trained as throttled Slurm array jobs on GH200 GPUs; all $u_1$ analysis is CPU-only linear algebra on the adapter weights and requires no model execution.

\section{Theoretical Details for the $u_1$ Representation}
\label{app:u1-theory}

This appendix proves the claims used to motivate the $u_1$ representation in the main text (Weight-space Detection). Throughout, singular vectors should be understood as unsigned directions $[u]=\{u,-u\}$ unless a signed representative is explicitly chosen; we measure distance between directions by $\sin\angle(u,\tilde u)=\sqrt{1-(u^\top\tilde u)^2}=\norm{uu^\top-\tilde u\tilde u^\top}_2$. The projective convention is what the per-fold sign alignment in our implementation realizes numerically.

\begin{proposition}[Scale invariance]
\label{prop:scale-invariance}
Let $M\in\R^{m\times n}$ have a simple largest singular value, and let $c\neq 0$. Then $[u_1(cM)]=[u_1(M)]$ and $u_1(cM)u_1(cM)^\top=u_1(M)u_1(M)^\top$.
\end{proposition}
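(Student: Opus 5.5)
The plan is to build an explicit SVD of $cM$ from an SVD of $M$. Then I will use the simplicity of the top singular value to show that the leading left singular vector is determined up to sign. Write $M=U\Sigma V^\top$ with $U,V$ orthogonal and $\Sigma$ carrying $\sigma_1>\sigma_2\ge\cdots\ge 0$ on its diagonal. Then
\[
cM = U(|c|\Sigma)\bigl(\sgn(c)V\bigr)^\top .
\]
Here $\sgn(c)V$ is still orthogonal and $|c|\Sigma$ has nonnegative, nonincreasing diagonal entries. So this is a valid SVD of $cM$, with singular values $|c|\sigma_i$ and the same left singular vectors $U$. The sign of $c$ has been absorbed into the right factor, so the left factor is untouched.

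Next, note that $|c|\sigma_1>|c|\sigma_2$ because $c\neq0$. Hence the largest singular value of $cM$ is also simple. To make ``$u_1$'' well defined independently of the chosen SVD, I characterize it intrinsically. It is a unit eigenvector for the top eigenvalue of $(cM)(cM)^\top=c^2MM^\top$, and that eigenvalue, $c^2\sigma_1^2$, is simple. The corresponding eigenspace is therefore one-dimensional, and it is the same line as the top eigenspace of $MM^\top$, because multiplying a symmetric matrix by $c^2>0$ preserves its eigenvectors and the ordering of its eigenvalues.

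Any unit vector spanning that line is $\pm u_1(M)$, which gives $[u_1(cM)]=[u_1(M)]$. The projector identity follows at once: $(-u)(-u)^\top=uu^\top$, so $u_1(cM)u_1(cM)^\top=u_1(M)u_1(M)^\top$. Equivalently, $\sin\angle(u_1(cM),u_1(M))=0$.

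I expect the only real subtlety to be well-definedness: the argument has to hold for whatever SVD routine, or in the paper's case QR-based computation, produces $u_1$. This is handled by the eigenspace characterization together with the simplicity hypothesis. Without simplicity, $u_1$ is not determined even up to sign, so the hypothesis is essential there and nowhere else.
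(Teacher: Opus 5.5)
Your proof is correct and follows the same route as the paper. Scaling by $c$ multiplies the singular values by $|c|$, the sign of $c$ is absorbed into a singular factor so the left singular subspaces are unchanged, and only the projector $u_1u_1^\top$ is intrinsic; your identification of $[u_1]$ with the top eigenline of $c^2MM^\top$ just makes explicit the uniqueness up to sign that the paper leaves implicit (that eigenvalue fails to be simple only for a zero column vector, $M=0$ with $n=1<m$, where $u_1$ is undefined anyway).
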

\begin{proof}
Multiplication by $c$ sends each singular value to $|c|\sigma_i$ and preserves the left singular subspaces. For $c<0$ the sign is absorbed into either singular vector; since $(u_i,v_i)$ and $(-u_i,-v_i)$ encode the same rank-one term, only the projector $u_iu_i^\top$ is intrinsic.
\end{proof}

\begin{proposition}[Rank-one subject model]
\label{prop:rank-one-subject}
Let $M=\gamma w a^\top$ with $w\neq0$, $a\neq0$, $\gamma\neq0$. Then $M$ has rank one, its only nonzero singular value is $|\gamma|\norm{w}_2\norm{a}_2$, and $[u_1(M)]=[w]$.
\end{proposition}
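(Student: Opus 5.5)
The plan is to write down an explicit singular value decomposition of $M$ and then read off the rank, the singular value, and the left singular direction directly from it. Set $\hat w = w/\norm{w}_2$ and $\hat a = a/\norm{a}_2$; both are well defined because $w\neq 0$ and $a\neq 0$. Let $s=\sgn(\gamma)$. Then
\begin{equation*}
M=\gamma w a^\top=\bigl(|\gamma|\norm{w}_2\norm{a}_2\bigr)\,\hat w\,(s\hat a)^\top .
\end{equation*}
This expresses $M$ as $\sigma\,u v^\top$ with unit vectors $u=\hat w$ and $v=s\hat a$, and with $\sigma=|\gamma|\norm{w}_2\norm{a}_2>0$. Completing $\{u\}$ and $\{v\}$ to orthonormal bases of $\R^m$ and $\R^n$ then gives a full SVD of $M$ whose only nonzero singular value is $\sigma$.

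The rank claim follows by two inclusions. The column space of $M$ is contained in $\operatorname{span}\{w\}$, so $\operatorname{rank}M\le 1$. Also $Ma=\gamma\norm{a}_2^2 w\neq 0$, so $\operatorname{rank}M\ge 1$.

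As a check that avoids relying on the existence of the SVD, I would compute
\begin{equation*}
MM^\top=\gamma^2\norm{a}_2^2\,ww^\top .
\end{equation*}
This is a rank-one positive semidefinite matrix with eigenvalue $\gamma^2\norm{a}_2^2\norm{w}_2^2$ on $\operatorname{span}\{w\}$ and eigenvalue $0$ on $w^\perp$. The square root of the nonzero eigenvalue is the stated singular value. The eigenspace for the top eigenvalue is one-dimensional and equals $\operatorname{span}\{w\}$, so $u_1(M)=\pm\hat w$ and $[u_1(M)]=[w]$ in the projective convention of this appendix.

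I expect the only real subtlety to be well-definedness. The paper's other statements, such as Proposition~\ref{prop:scale-invariance}, assume a simple largest singular value. Here that assumption holds automatically, since the nonzero singular value $\sigma$ is strictly larger than the zero singular values and has multiplicity one. It follows that $u_1$ is determined up to sign, and the sign ambiguity is absorbed by working with $[\cdot]$, exactly as in the proof of Proposition~\ref{prop:scale-invariance}. I would also note that none of $\gamma$, $\norm{a}_2$, or the direction of $a$ affects $[u_1(M)]$. This is the invariance used in the main-text motivation.
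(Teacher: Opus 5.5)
Your proposal is correct and matches the paper's proof. Both write down the explicit SVD $M=(|\gamma|\norm{w}_2\norm{a}_2)\,uv^\top$ with unit vectors along $w$ and $a$, then read off the rank, the singular value, and the left singular space $\operatorname{span}\{w\}$. The differences are cosmetic: you attach $\sgn(\gamma)$ to the right factor rather than the left, and your $MM^\top$ eigenvalue check and the remark on simplicity of $\sigma_1$ are harmless additions the paper leaves implicit.
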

\begin{proof}
The identity $\gamma w a^\top=|\gamma|\norm{w}_2\norm{a}_2\bigl(\sgn(\gamma)\tfrac{w}{\norm{w}_2}\bigr)\bigl(\tfrac{a}{\norm{a}_2}\bigr)^\top$ exhibits an SVD of a nonzero rank-one matrix, whose nonzero left singular space is $\operatorname{span}\{w\}$.
\end{proof}

\begin{theorem}[One-dimensional Wedin bound]
\label{thm:one-dim-wedin}
Let $M\in\R^{m\times n}$ have singular values $\sigma_1>\sigma_2\geq\cdots$, top singular vectors $u,v$, and gap $g=\sigma_1-\sigma_2$. Let $\widetilde M=M+E$, $\varepsilon=\norm{E}_2$, and let $\tilde u,\tilde v$ be a leading singular pair of $\widetilde M$ with singular value $\tilde\sigma_1$. If $\tilde\sigma_1>\sigma_2$ then
\begin{equation}
    \max\{\sin\angle(u,\tilde u),\sin\angle(v,\tilde v)\}\leq\frac{\varepsilon}{\tilde\sigma_1-\sigma_2}.
    \label{eq:posteriori-wedin}
\end{equation}
Consequently, by Weyl's inequality, if $\varepsilon<g$ then the bound is at most $\varepsilon/(g-\varepsilon)$; and if $\varepsilon<g/2$ the perturbed largest singular value is simple and the bound is at most $2\varepsilon/g$.
\end{theorem}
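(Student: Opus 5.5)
The plan is to prove the a posteriori bound \eqref{eq:posteriori-wedin} directly, using a residual argument for the single pair $(\tilde u,\tilde v)$, and then derive the two consequences from Weyl's inequality.

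\emph{Residuals.} Write $\widetilde M\tilde v=\tilde\sigma_1\tilde u$ and $\widetilde M^\top\tilde u=\tilde\sigma_1\tilde v$. Substituting $M=\widetilde M-E$ gives residuals
\[
r:=M\tilde v-\tilde\sigma_1\tilde u=-E\tilde v,\qquad s:=M^\top\tilde u-\tilde\sigma_1\tilde v=-E^\top\tilde u,
\]
so $\norm{r}_2,\norm{s}_2\le\varepsilon$.

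\emph{Projection.} Let $P_\perp=I-uu^\top$ and $Q_\perp=I-vv^\top$. Using the SVD, let $M_\perp=M-\sigma_1uv^\top$, which satisfies $\norm{M_\perp}_2=\sigma_2$. Then $P_\perp M=M_\perp Q_\perp$ and $Q_\perp M^\top=M_\perp^\top P_\perp$. Projecting the residual identities gives
\[
\tilde\sigma_1 P_\perp\tilde u=M_\perp Q_\perp\tilde v-P_\perp r,\qquad \tilde\sigma_1 Q_\perp\tilde v=M_\perp^\top P_\perp\tilde u-Q_\perp s.
\]
Set $x=\norm{P_\perp\tilde u}_2=\sin\angle(u,\tilde u)$ and $y=\norm{Q_\perp\tilde v}_2=\sin\angle(v,\tilde v)$. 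Taking norms yields the coupled inequalities
\[
\tilde\sigma_1x\le\sigma_2y+\varepsilon,\qquad \tilde\sigma_1y\le\sigma_2x+\varepsilon.
\]

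\emph{Closing the system.} This is the main step. Let $t=\max\{x,y\}$ and take whichever inequality has $t$ on the left. Since the other angle is at most $t$, we get $\tilde\sigma_1t\le\sigma_2t+\varepsilon$. Because $\tilde\sigma_1>\sigma_2$ by hypothesis, this gives $t\le\varepsilon/(\tilde\sigma_1-\sigma_2)$, which is \eqref{eq:posteriori-wedin}. The one care point is to use the rank-one deflation $M_\perp$ rather than $M$ itself. That choice is what makes the factor $\sigma_2$ appear instead of $\sigma_1$. It also requires checking the commutation identities, which follow from $Mv=\sigma_1u$ and $M^\top u=\sigma_1 v$.

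\emph{Consequences.} Weyl's inequality gives $|\tilde\sigma_i-\sigma_i|\le\varepsilon$, so $\tilde\sigma_1\ge\sigma_1-\varepsilon$ and $\tilde\sigma_1-\sigma_2\ge g-\varepsilon$.
\begin{itemize}
\item If $\varepsilon<g$, the hypothesis $\tilde\sigma_1>\sigma_2$ holds automatically, and the bound becomes $\varepsilon/(g-\varepsilon)$.
\item If $\varepsilon<g/2$, then $\tilde\sigma_1\ge\sigma_1-\varepsilon>\sigma_2+\varepsilon\ge\tilde\sigma_2$, so the top perturbed singular value is simple. Moreover $g-\varepsilon>g/2$, so $\varepsilon/(g-\varepsilon)<2\varepsilon/g$.
\end{itemize}
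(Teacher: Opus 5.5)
Your proof is correct and follows the paper's argument essentially step for step. Like the paper, you project the leading singular equations off $u$ and $v$, obtain the coupled inequalities $\tilde\sigma_1 x\le\sigma_2 y+\varepsilon$ and $\tilde\sigma_1 y\le\sigma_2 x+\varepsilon$, close them by taking the maximum, and finish with Weyl; your explicit deflation $M_\perp=M-\sigma_1uv^\top$ just spells out the paper's remark that $M$ has operator norm $\sigma_2$ on $v^\perp$. One cosmetic point: the last comparison should read $\varepsilon/(g-\varepsilon)\le 2\varepsilon/g$, since the strict inequality fails at $\varepsilon=0$.
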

\begin{proof}
Let $s_u=\norm{(I-uu^\top)\tilde u}_2$ and $s_v=\norm{(I-vv^\top)\tilde v}_2$. The leading singular equations $\widetilde M\tilde v=\tilde\sigma_1\tilde u$ and $\widetilde M^\top\tilde u=\tilde\sigma_1\tilde v$, projected orthogonally to $u$ and to $v$ respectively, give $\tilde\sigma_1 s_u\leq\sigma_2 s_v+\varepsilon$ and $\tilde\sigma_1 s_v\leq\sigma_2 s_u+\varepsilon$ (using $Mv=\sigma_1u$, $M^\top u=\sigma_1 v$, and that $M$ has operator norm $\sigma_2$ on $v^\perp$). With $s=\max\{s_u,s_v\}$ these give $(\tilde\sigma_1-\sigma_2)s\leq\varepsilon$, proving \eqref{eq:posteriori-wedin}. Weyl's inequality gives $\tilde\sigma_1\geq\sigma_1-\varepsilon$, hence $\tilde\sigma_1-\sigma_2\geq g-\varepsilon$; if $\varepsilon<g/2$ a second application gives $\tilde\sigma_1-\tilde\sigma_2\geq g-2\varepsilon>0$, so the perturbed top singular value is simple and $\varepsilon/(g-\varepsilon)\leq2\varepsilon/g$.
\end{proof}

\begin{proposition}[Shared output-gradient model]
\label{prop:shared-gradient}
For a linear projection $y=Wx$, suppose the subject-specific output-side gradients along the trajectory are $g_t=\lambda_t w+\xi_t$. A full-matrix first-order update with step $\eta$ is $\Delta W=-\eta w a^\top+E$, with $a=\sum_t\lambda_t x_t$ and $E=-\eta\sum_t\xi_t x_t^\top$. If $a\neq0$ and $\norm{E}_2<\eta\norm{w}_2\norm{a}_2$, then $\sin\angle(u_1(\Delta W),w)\leq \norm{E}_2/(\eta\norm{w}_2\norm{a}_2-\norm{E}_2)$.
\end{proposition}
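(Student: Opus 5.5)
The plan is to reduce the statement to \Cref{prop:rank-one-subject} and \Cref{thm:one-dim-wedin}, with $M=-\eta w a^\top$ as the unperturbed matrix and $E$ as the perturbation.

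\textbf{Step 1: the decomposition of $\Delta W$.} For $y=Wx$, the loss gradient with respect to $W$ at step $t$ is the outer product $g_t x_t^\top$, where $g_t$ is the output-side gradient. A full-matrix first-order update with step $\eta$ accumulates
\begin{equation*}
\Delta W=-\eta\sum_t g_t x_t^\top .
\end{equation*}
Substituting $g_t=\lambda_t w+\xi_t$ and using bilinearity splits this as
\begin{equation*}
\Delta W=-\eta\, w\Bigl(\sum_t\lambda_t x_t\Bigr)^{\!\top}-\eta\sum_t\xi_t x_t^\top=-\eta w a^\top+E .
\end{equation*}
We treat the $x_t$ as given along the trajectory (first-order, ignoring the dependence of $x_t$ on $W$). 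With that convention this step is pure algebra.

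\textbf{Step 2: the unperturbed matrix.} Set $M=-\eta w a^\top$. We may assume $\eta>0$ and $w\neq 0$; if $w=0$ the angle is undefined. By \Cref{prop:rank-one-subject} with $\gamma=-\eta$:
\begin{itemize}
\item $M$ has rank one;
\item $\sigma_1(M)=\eta\norm{w}_2\norm{a}_2$ and $\sigma_2(M)=0$, so the gap is $g=\eta\norm{w}_2\norm{a}_2>0$;
\item the top left singular direction is $[u]=[w]$.
\end{itemize}
Hence $\sin\angle(u,w)=0$, and it suffices to bound $\sin\angle(u,u_1(\Delta W))$.

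\textbf{Step 3: apply Wedin.} Apply \Cref{thm:one-dim-wedin} with $\widetilde M=\Delta W=M+E$ and $\varepsilon=\norm{E}_2$. The hypothesis $\norm{E}_2<\eta\norm{w}_2\norm{a}_2$ is exactly $\varepsilon<g$. Weyl's inequality then gives
\begin{equation*}
\tilde\sigma_1\ge\sigma_1-\varepsilon>0=\sigma_2,
\end{equation*}
so the a posteriori condition $\tilde\sigma_1>\sigma_2$ holds. The "consequently" clause of the theorem then yields
\begin{equation*}
\sin\angle(u,\tilde u)\le\frac{\varepsilon}{g-\varepsilon}=\frac{\norm{E}_2}{\eta\norm{w}_2\norm{a}_2-\norm{E}_2}.
\end{equation*}
Since $[u]=[w]$, this is the claimed bound.

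\textbf{Main obstacle: well-definedness of $u_1(\Delta W)$.} Under only $\varepsilon<g$, not $\varepsilon<g/2$, the top singular value of $\Delta W$ may fail to be simple. I would handle this by reading $u_1(\Delta W)$ as \emph{any} leading left singular vector. The a posteriori form \eqref{eq:posteriori-wedin} requires only that $(\tilde u,\tilde v)$ be some leading singular pair with $\tilde\sigma_1>\sigma_2$, so the bound holds for every choice of representative. The sign ambiguity is absorbed by the projective convention for $\sin\angle$.
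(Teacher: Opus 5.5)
Your proposal is correct and follows the paper's own proof. Like the paper, you accumulate $-\eta\sum_t g_t x_t^\top$, split off the rank-one part with gap $\eta\norm{w}_2\norm{a}_2$ and zero second singular value, and apply \Cref{thm:one-dim-wedin} with perturbation $E$. Your use of Weyl to check $\tilde\sigma_1>\sigma_2$, and your remark that the bound holds for any leading singular pair when the top singular value of $\Delta W$ is repeated, fill in details the paper leaves implicit. You also do not need to assume $\eta>0$ and $w\neq0$: both are forced by the hypothesis $\norm{E}_2<\eta\norm{w}_2\norm{a}_2$.
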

\begin{proof}
Gradient descent accumulates $-\eta\sum_t g_t x_t^\top$; substituting $g_t=\lambda_t w+\xi_t$ gives the decomposition. The rank-one part has singular value $\eta\norm{w}_2\norm{a}_2$ and second singular value $0$, so \Cref{thm:one-dim-wedin} applies with gap $g=\eta\norm{w}_2\norm{a}_2$ and perturbation $E$.
\end{proof}

\begin{lemma}[Early-time LoRA bottleneck]
\label{lem:lora-bottleneck}
Let $\Delta W=BA$ with $B_0=0$ and $A_0$ fixed at initialization. If the full weight-gradient signal is $G=wa^\top$, the first gradient step on $B$ gives $B_1A_0=-\eta\, w\,(A_0^\top A_0 a)^\top$, so the bottleneck projects the cue side but preserves the output-side left direction whenever $A_0^\top A_0 a\neq0$.
\end{lemma}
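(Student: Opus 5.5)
The proof is a one-step chain-rule computation followed by an appeal to \Cref{prop:rank-one-subject}. Write the loss as a function of $W_0+BA$. Since $\Delta W$ enters linearly, the gradients with respect to the factors are $\nabla_B L=G A^\top$ and $\nabla_A L=B^\top G$, where $G=\nabla_{\Delta W}L$ is the full weight-gradient signal. By hypothesis $G=wa^\top$ at the initial point. I will state this step explicitly because it is the only place the LoRA parametrization enters.

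Next, apply one gradient step with step size $\eta$ at $(B_0,A_0)=(0,A_0)$. For $B$ this gives
\[
B_1 = B_0-\eta\,G A_0^\top = -\eta\, w a^\top A_0^\top .
\]
Because $B_0=0$, we have $\nabla_A L=B_0^\top G=0$, so $A_1=A_0$ and the product after one step is $B_1A_1=B_1A_0$. This justifies reading the update as $B_1A_0$, as the statement does. Multiplying out,
\[
B_1A_0=-\eta\,w\,a^\top A_0^\top A_0=-\eta\,w\,(A_0^\top A_0 a)^\top ,
\]
where the last equality uses the symmetry of $A_0^\top A_0$. This is exactly the claimed formula.

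The update is rank one, of the form $\gamma\,w\,\tilde a^\top$ with $\gamma=-\eta$ and $\tilde a=A_0^\top A_0 a$. When $\tilde a\neq 0$, and assuming $\eta\neq0$ and $w\neq0$ as the setting implicitly does, \Cref{prop:rank-one-subject} gives $[u_1(B_1A_0)]=[w]$. So the output-side direction is preserved exactly. The cue side is replaced by $A_0^\top A_0 a$, which is a reweighted projection of $a$ onto the row space of $A_0$; this is the sense in which ``the bottleneck projects the cue.'' Note that $A_0^\top A_0 a\neq0$ if and only if $A_0a\neq0$, since $\norm{A_0^\top A_0a}\ge \norm{A_0a}^2/\norm{a}$. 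For a random Gaussian $A_0$ this holds almost surely, which is worth remarking on.

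I expect no real obstacle. The only point needing care is the gradient-layout convention: one must check that $\nabla_B L=GA^\top$, not $A^\top G$ or some transpose, so that $w$ ends up on the left. One must also confirm that $A$ does not move at the first step, since $B_0=0$, so the claim concerns $B_1A_0$ exactly rather than a first-order approximation.
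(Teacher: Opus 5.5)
Your proposal is correct and follows the paper's own proof: the same computation $\nabla_B L = GA_0^\top$, $B_1=-\eta\, w a^\top A_0^\top$, and $B_1A_0=-\eta\, w a^\top A_0^\top A_0$. Your extra remarks are also correct and only spell out what the paper leaves implicit: $A$ does not move at the first step because $B_0=0$, \Cref{prop:rank-one-subject} gives $[u_1]=[w]$, and $A_0^\top A_0 a\neq0$ holds exactly when $A_0a\neq0$.
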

\begin{proof}
The gradient with respect to $B$ is $GA_0^\top$, so $B_1=-\eta w a^\top A_0^\top$ and $B_1A_0=-\eta w a^\top A_0^\top A_0$.
\end{proof}

\begin{corollary}[Stable rank and approximate rank one]
\label{cor:stable-rank}
If $M=\tau uv^\top+R$ with $\tau>0$, $\norm{u}_2=\norm{v}_2=1$, and $\norm{R}_2<\tau$, then $\sin\angle(u_1(M),u)\leq\norm{R}_2/(\tau-\norm{R}_2)$. Moreover $\sr(M)=\norm{M}_F^2/\norm{M}_2^2\leq1+\rho^2$ implies $\sigma_2(M)/\sigma_1(M)\leq\rho$ and $\sigma_1-\sigma_2\geq(1-\rho)\sigma_1$, so stable rank near one is an observable certificate of a dominant direction (sufficient, not necessary).
\end{corollary}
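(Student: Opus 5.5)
The corollary has two parts, and I would prove them separately.

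\textbf{First claim.} I would apply \Cref{thm:one-dim-wedin} directly with unperturbed matrix $\tau uv^\top$ and perturbation $E=R$. By \Cref{prop:rank-one-subject}, $\tau uv^\top$ has rank one, with $\sigma_1=\tau$, $\sigma_2=0$, top left singular vector $u$, and gap $g=\tau$. The hypothesis $\norm{R}_2<\tau$ is exactly $\varepsilon<g$. The consequence clause of the theorem then gives
\[
\sin\angle(u_1(M),u)\le \frac{\norm{R}_2}{\tau-\norm{R}_2}.
\]
I only need to check the theorem's precondition $\tilde\sigma_1>\sigma_2=0$. Weyl gives $\sigma_1(M)\ge\tau-\norm{R}_2>0$, so it holds. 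If $\sigma_1(M)$ is not simple, I read $u_1(M)$ as any leading singular vector; the theorem is stated for any leading pair, so the bound still applies. This is the same argument as \Cref{prop:shared-gradient}.

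\textbf{Second claim.} Write the squared singular values as $\sigma_i^2$, so that
\[
\norm{M}_F^2=\sum_i\sigma_i^2 \ge \sigma_1^2+\sigma_2^2 .
\]
Then $\sr(M)\ge 1+\sigma_2^2/\sigma_1^2$. Combining this with $\sr(M)\le1+\rho^2$ gives $\sigma_2^2/\sigma_1^2\le\rho^2$, hence $\sigma_2/\sigma_1\le\rho$. This assumes $M\neq0$, which is needed for $\sr$ to be defined. The gap bound is immediate: $\sigma_1-\sigma_2\ge\sigma_1-\rho\sigma_1=(1-\rho)\sigma_1$.

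\textbf{Remarks.} The phrase "sufficient, not necessary" is interpretive. To justify it, I would give a short example where the gap is large but the stable rank is big. Take $\sigma_1=1$ and many singular values equal to $1/2$. The gap is $1/2$, yet $\sr$ grows with the number of such values.

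There is no real obstacle. The only care points are (i) confirming the a posteriori condition $\tilde\sigma_1>\sigma_2$ via Weyl, and (ii) noting that the useful regime is $\rho<1$. When $\rho\ge1$, the gap statement is vacuous but still true.
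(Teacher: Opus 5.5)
Your proposal is correct and follows the paper's proof: apply \Cref{thm:one-dim-wedin} to $\tau uv^\top$ (gap $\tau$) with perturbation $R$, then read off the stable-rank bound from $\sr(M)=1+\sum_{i\ge2}(\sigma_i/\sigma_1)^2$. Your extra remarks are sound but not needed for the proof: checking $\tilde\sigma_1>0$ via Weyl, allowing a non-simple top singular value, and the example of a large gap with large stable rank.
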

\begin{proof}
Apply \Cref{thm:one-dim-wedin} to $\tau uv^\top$ (gap $\tau$) with perturbation $R$. The stable-rank claim is immediate from $\sr(M)=1+\sum_{i\geq2}(\sigma_i/\sigma_1)^2$.
\end{proof}

\begin{corollary}[Merge retention]
\label{cor:merge-retention}
Let $M_\alpha=(1-\alpha)H+\alpha B$, with $H$ having top left direction $u_h$ and gap $g_h>0$. If $\alpha\norm{B}_2<(1-\alpha)g_h$ then $\sin\angle(u_1(M_\alpha),u_h)\leq \alpha\norm{B}_2/((1-\alpha)g_h-\alpha\norm{B}_2)$.
\end{corollary}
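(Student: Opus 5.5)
The plan is to treat the merge as a perturbation of the rescaled matrix $(1-\alpha)H$ and apply \Cref{thm:one-dim-wedin} directly, much as \Cref{cor:stable-rank} and \Cref{prop:shared-gradient} do. Write
\begin{equation*}
M_\alpha=(1-\alpha)H+E,\qquad E=\alpha B .
\end{equation*}
The unperturbed matrix is $M=(1-\alpha)H$, and the perturbation has $\varepsilon=\norm{E}_2=\alpha\norm{B}_2$. Here we implicitly take $\alpha\in[0,1)$, so that $1-\alpha>0$; for $\alpha\ge 1$ the hypothesis $\alpha\norm{B}_2<(1-\alpha)g_h$ cannot hold, since its right-hand side is nonpositive.

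First, I will record the spectrum of $M$. By \Cref{prop:scale-invariance} with $c=1-\alpha\neq0$, $M$ has the same top left direction $[u_h]$ as $H$. Its singular values are $(1-\alpha)\sigma_i(H)$, so its gap is $g=(1-\alpha)g_h>0$. In particular the top singular value of $M$ is simple, which is what \Cref{thm:one-dim-wedin} requires.

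Second, I will invoke \Cref{thm:one-dim-wedin} with this $M$ and $E$. The hypothesis $\alpha\norm{B}_2<(1-\alpha)g_h$ is exactly $\varepsilon<g$. By Weyl's inequality, $\tilde\sigma_1(M_\alpha)\ge \sigma_1(M)-\varepsilon>\sigma_2(M)$, so the a posteriori condition of the theorem holds. The "consequently" clause then gives
\begin{equation*}
\sin\angle(u_1(M_\alpha),u_h)\le \frac{\varepsilon}{g-\varepsilon}=\frac{\alpha\norm{B}_2}{(1-\alpha)g_h-\alpha\norm{B}_2}.
\end{equation*}
This is the stated bound. It holds for any leading singular vector of $M_\alpha$, so it is meaningful even if the top singular value of $M_\alpha$ happens not to be simple.

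The main obstacle is bookkeeping rather than substance. One point is the direction convention: $u_1(M_\alpha)$ is only defined up to sign, and the bound is stated for the projective quantity $\sin\angle$, which is sign-invariant, so no alignment is needed. The other point is checking that the gap scales linearly under multiplication by $1-\alpha$, and this is supplied by \Cref{prop:scale-invariance}.
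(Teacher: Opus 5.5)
Your proposal is correct and is exactly the paper's argument: apply \Cref{thm:one-dim-wedin} to the base matrix $(1-\alpha)H$ (leading direction $u_h$, gap $(1-\alpha)g_h$) with perturbation $\alpha B$, and read off the $\varepsilon/(g-\varepsilon)$ consequence. Your extra bookkeeping only makes explicit what the paper leaves tacit; this covers the Weyl check and the restriction $\alpha\in[0,1)$. Note that the hypothesis rules out $\alpha\ge 1$ but not $\alpha<0$, so $\alpha\ge 0$ is a genuine implicit assumption: it is needed for $\norm{\alpha B}_2=\alpha\norm{B}_2$.
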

\begin{proof}
Apply \Cref{thm:one-dim-wedin} to the base matrix $(1-\alpha)H$ (leading direction $u_h$, gap $(1-\alpha)g_h$) with perturbation $\alpha B$.
\end{proof}

\begin{proposition}[Explicit orthogonal merge threshold]
\label{prop:orthogonal-merge}
Let $H=\tau_h u_h v_h^\top$ and $B=\tau_b u_b v_b^\top$ with $u_h\perp u_b$, $v_h\perp v_b$, $\tau_h,\tau_b>0$. The leading left direction of $M_\alpha=(1-\alpha)H+\alpha B$ is $[u_h]$ when $\alpha<\alpha_\star$ and $[u_b]$ when $\alpha>\alpha_\star$, where $\alpha_\star=\tau_h/(\tau_h+\tau_b)$; at $\alpha_\star$ the top singular value is repeated and the leading direction is not unique.
\end{proposition}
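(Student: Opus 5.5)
The plan is to write down an explicit SVD of $M_\alpha$ and read off which term dominates. Since $u_h\perp u_b$ and $v_h\perp v_b$ are unit vectors (as in the rank-one normalization of \Cref{prop:rank-one-subject}), the expression
\begin{equation*}
M_\alpha=(1-\alpha)\tau_h\,u_hv_h^\top+\alpha\tau_b\,u_bv_b^\top
\end{equation*}
is already an SVD of a rank-two matrix for $\alpha\in(0,1)$. The left vectors $\{u_h,u_b\}$ are orthonormal, the right vectors $\{v_h,v_b\}$ are orthonormal, and the coefficients $s_h(\alpha)=(1-\alpha)\tau_h$ and $s_b(\alpha)=\alpha\tau_b$ are both positive. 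I will state that $\alpha$ ranges over $[0,1]$, as the merge notation suggests. At the endpoints the matrix is rank one, and the conclusion follows directly from \Cref{prop:rank-one-subject}.

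To confirm this is a legitimate SVD, I will check that $M_\alpha^\top M_\alpha = s_h^2 v_hv_h^\top + s_b^2 v_bv_b^\top$. The cross terms vanish because $u_h^\top u_b=0$. Likewise $M_\alpha M_\alpha^\top=s_h^2u_hu_h^\top+s_b^2u_bu_b^\top$. Hence the nonzero singular values are exactly $s_h$ and $s_b$, with left singular vectors $u_h$ and $u_b$. The top left direction is therefore $[u_h]$ when $s_h>s_b$ and $[u_b]$ when $s_b>s_h$.

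The remaining step is to solve $(1-\alpha)\tau_h$ versus $\alpha\tau_b$. The difference $s_h-s_b=\tau_h-\alpha(\tau_h+\tau_b)$ is strictly decreasing in $\alpha$ and vanishes exactly at $\alpha_\star=\tau_h/(\tau_h+\tau_b)$. This gives $[u_h]$ for $\alpha<\alpha_\star$ and $[u_b]$ for $\alpha>\alpha_\star$.

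At $\alpha_\star$ we have $s_h=s_b=:s$, so $M_\alpha M_\alpha^\top$ restricted to $\operatorname{span}\{u_h,u_b\}$ equals $s^2$ times the identity. Every unit vector in that plane is then a top left singular vector: for $u=\cos\theta\,u_h+\sin\theta\,u_b$, the corresponding right vector $\cos\theta\,v_h+\sin\theta\,v_b$ gives $M v = s u$. The leading direction is therefore not unique. There is no real obstacle in this argument; the only care needed is the orthogonality bookkeeping and the degenerate tie case, together with the note that the sign convention of \Cref{prop:scale-invariance} makes the answer well defined as an unsigned direction.
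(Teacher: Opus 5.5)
Your proposal is correct and follows the paper's own argument: the two rank-one terms have orthogonal left and right vectors, so they form an explicit SVD with singular values $(1-\alpha)\tau_h$ and $\alpha\tau_b$, and comparing these gives the threshold $\alpha_\star$. You also state two assumptions the paper leaves implicit, namely unit-norm vectors and $\alpha\in[0,1]$; the second is genuinely needed, since outside $[0,1]$ the singular values become $|1-\alpha|\tau_h$ and $|\alpha|\tau_b$ and the stated dichotomy can fail.
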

\begin{proof}
The two rank-one terms have mutually orthogonal left and right singular vectors, so $M_\alpha$ has singular values $(1-\alpha)\tau_h$ and $\alpha\tau_b$; comparing them gives the threshold.
\end{proof}

\begin{proposition}[Top-$k$ merge condition, orthogonal model]
\label{prop:topk-merge}
Let $H=\tau_h u_h v_h^\top$ and $B=\sum_{i=1}^q\beta_i b_i c_i^\top$ with $\beta_1\geq\cdots\geq\beta_q>0$ and all displayed directions mutually orthogonal. In $M_\alpha=(1-\alpha)H+\alpha B$ the target direction $u_h$ lies in the top-$k$ left singular subspace iff $(1-\alpha)\tau_h>\alpha\beta_k$ (with the usual non-uniqueness at equality).
\end{proposition}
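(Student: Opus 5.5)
The plan is to write down an explicit SVD of $M_\alpha$ and then read off which singular values are largest; no perturbation argument is needed. This is the same route as \Cref{prop:orthogonal-merge}, with $B$ having $q$ terms instead of one.

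First, I would expand
\[
M_\alpha=(1-\alpha)\tau_h\,u_h v_h^\top+\sum_{i=1}^q \alpha\beta_i\, b_i c_i^\top .
\]
By hypothesis the left vectors $\{u_h,b_1,\dots,b_q\}$ are orthonormal, and so are the right vectors $\{v_h,c_1,\dots,c_q\}$. We may normalize them, since $H$ and $B$ are written in SVD form. For $0<\alpha<1$ every coefficient is positive, so this display is a compact SVD of $M_\alpha$. Its nonzero singular values are the multiset $\{(1-\alpha)\tau_h,\ \alpha\beta_1,\dots,\alpha\beta_q\}$, with left singular vectors $u_h,b_1,\dots,b_q$ respectively. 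All remaining singular values are zero.

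Second, I would rank these values. Since $\beta_1\ge\cdots\ge\beta_q$, the $B$-values $\alpha\beta_i$ are already in decreasing order. The value $(1-\alpha)\tau_h$ is therefore inserted at position $1+\#\{i:\alpha\beta_i>(1-\alpha)\tau_h\}$, provided there are no ties.

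\emph{If direction.} Suppose $(1-\alpha)\tau_h>\alpha\beta_k$. By monotonicity, at most $k-1$ of the $\beta$-values can exceed $(1-\alpha)\tau_h$. So $(1-\alpha)\tau_h$ is among the top $k$ singular values. Moreover it is strictly larger than the $(k+1)$-st singular value, which is at most $\alpha\beta_k$. Hence the top-$k$ left singular subspace is uniquely defined and contains $u_h$.

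\emph{Only-if direction.} Suppose $(1-\alpha)\tau_h<\alpha\beta_k$. Then $\alpha\beta_1,\dots,\alpha\beta_k$ are all strictly larger than $(1-\alpha)\tau_h$. The top-$k$ subspace is then uniquely $\operatorname{span}\{b_1,\dots,b_k\}$, which is orthogonal to $u_h$, so $u_h$ is not in it.

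At equality the $k$-th singular value is repeated across $u_h$ and $b_k$, so the top-$k$ subspace is not unique. This is exactly the stated caveat, mirroring the $\alpha_\star$ case of \Cref{prop:orthogonal-merge}.

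The main obstacle is handling ties carefully, both among the $\beta_i$ and at equality. To do this cleanly, I would phrase the statement as membership in the sum of eigenspaces of $M_\alpha M_\alpha^\top$ for its $k$ largest eigenvalues, and declare the subspace non-unique exactly when $\sigma_k=\sigma_{k+1}$. Under that convention, ties among the $\beta_i$ alone do not affect $u_h$ unless they involve the value $(1-\alpha)\tau_h$. A smaller point is to note the range $\alpha\in(0,1)$ and the case $k>q$, where the condition is read with $\beta_k:=0$ and $u_h$ is trivially included.
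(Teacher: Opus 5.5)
Your proposal is correct and follows the paper's own argument: write $M_\alpha$ as an explicit SVD with singular values $(1-\alpha)\tau_h,\alpha\beta_1,\dots,\alpha\beta_q$, then count how many donor values exceed $(1-\alpha)\tau_h$. Your tie handling is more careful than the paper's one line, with one small overstatement: the top-$k$ subspace need not be unique in either direction (e.g.\ if $\beta_{k-1}=\beta_k$, or if $\beta_k=\beta_{k+1}$), but your conclusion still holds because the operative strict gap is between $(1-\alpha)\tau_h$ and $\alpha\beta_k$, so $u_h$ lies in every (respectively, in no) admissible top-$k$ subspace.
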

\begin{proof}
The singular values of $M_\alpha$ are $(1-\alpha)\tau_h$ together with $\alpha\beta_1,\dots,\alpha\beta_q$; $u_h$ is among the top $k$ exactly when fewer than $k$ donor values strictly exceed $(1-\alpha)\tau_h$.
\end{proof}

For a general top-$k$ subspace, Wedin's theorem \cite{Wedin1972,StewartSun1990} gives the analogous stability statement $\norm{\sin\Theta(U_k,\widetilde U_k)}_2\leq\norm{E}_2/(\sigma_k-\sigma_{k+1}-\norm{E}_2)$ whenever the denominator is positive. Multi-component features therefore mitigate only the failure mode in which the target is displaced from rank one but remains within a well-separated leading subspace; if the top-$k$ gap is small, the subspace itself is unstable. These are statements about identifiability of directions in adapter weight space, not certificates of what the composed model can generate.

\section{Additional Results}
\label{app:results}

\paragraph{Per-category results.} \Cref{tab:perclass-full} gives the
per-category one-vs-rest AUROC with multi-seed confidence intervals and the negative
controls behind the headline macro result, \Cref{tab:clfsweep} the classifier-capacity
sweep, and
\Cref{fig:confusion} the class-balanced confusion structure, where errors concentrate
among the visually similar natural-image categories while age is the most separable.

\begin{table}[t]\centering\small
\begin{tabular}{@{}lcc@{}}
\toprule
Category & OvR AUROC & 95\% CI \\
\midrule
age (adult/child/youth) & 0.998 & $\pm$0.000 \\
land cover              & 1.000 & $\pm$0.000 \\
objects                 & 0.989 & $\pm$0.000 \\
genus                   & 0.984 & $\pm$0.001 \\
scenes                  & 0.963 & $\pm$0.001 \\
textures                & 0.951 & $\pm$0.002 \\
birds                   & 0.945 & $\pm$0.002 \\
\midrule
\textbf{macro}          & \textbf{0.976} & $\pm$0.001 \\
label-permuted (control) & 0.500 & $\pm$0.012 \\
random-feature (control) & 0.506 & $\pm$0.010 \\
\bottomrule
\end{tabular}
\caption{Per-category in-distribution one-vs-rest AUROC (logistic regression on $u_1$, SD-1.5 benchmark), mean and $95\%$ CI over eight CV reshuffles, with
negative controls at chance.}
\label{tab:perclass-full}
\end{table}

\begin{table}[t]\centering\small
\begin{tabular}{@{}lc@{}}
\toprule
Classifier & macro OvR AUROC \\
\midrule
logistic regression & $0.976\pm0.005$ \\
MLP (256 hidden)     & $0.968\pm0.006$ \\
random forest        & $0.940\pm0.010$ \\
\bottomrule
\end{tabular}
\caption{Classifier-capacity sweep on the seven-category benchmark (mean$\pm$std over CV
folds). The result tracks the $u_1$ representation, not classifier capacity---the random
forest lags only because tree ensembles handle the dense high-dimensional feature less
gracefully.}
\label{tab:clfsweep}
\end{table}

\begin{figure}[t]
\centering
\includegraphics[width=0.92\columnwidth]{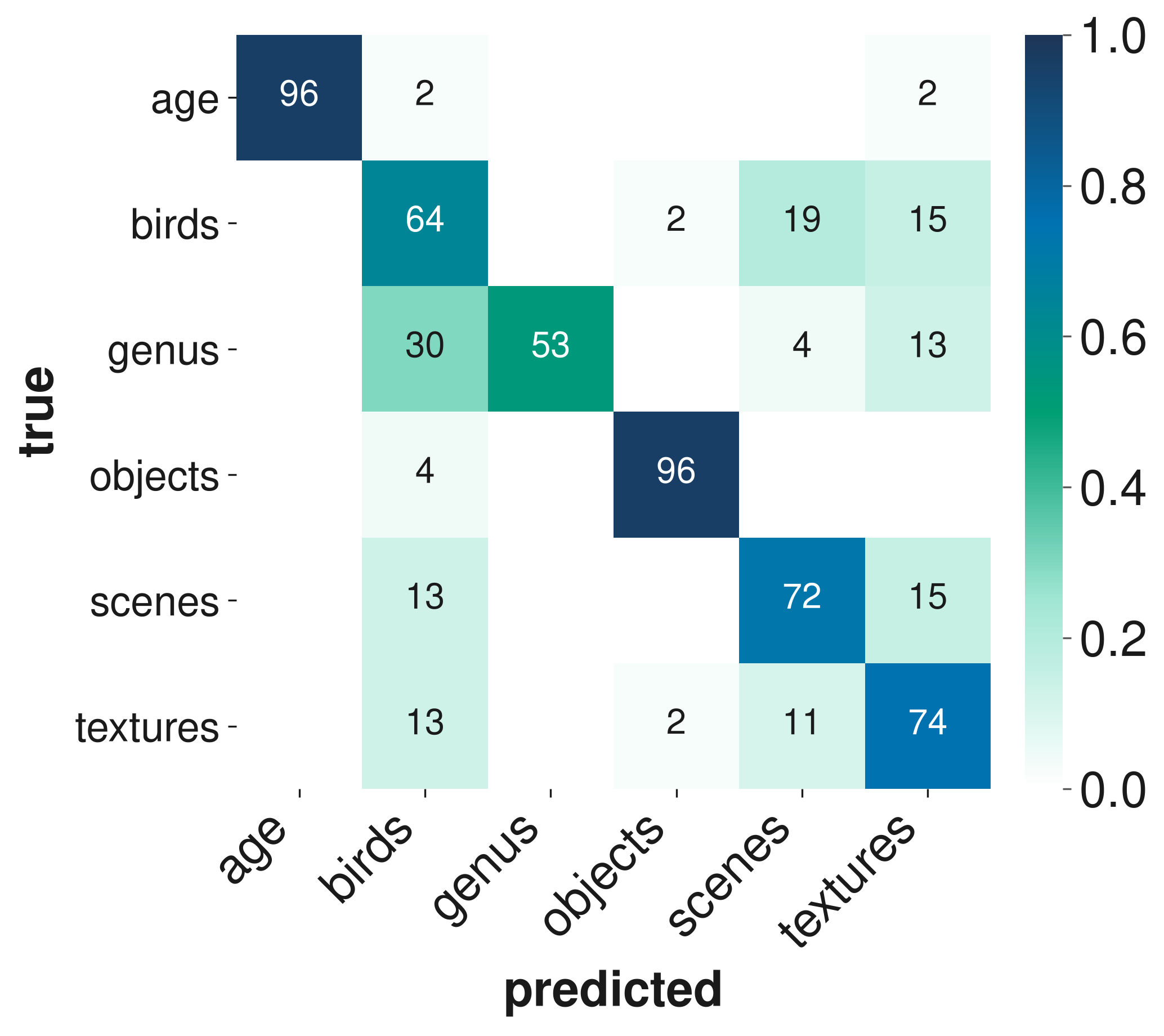}
\caption{Class-balanced confusion (row-normalized \%, logistic regression on
$u_1$, 47 adapters per category). Errors track visual domain---the detailed
natural-image categories mutually confuse---while age, the safety-relevant category, is
the most separable.}
\label{fig:confusion}
\end{figure}

\paragraph{Classification by base model.} \Cref{tab:crossbase} reports $u_1$
across SD-1.5, SDXL, and FLUX.1-schnell.

\begin{table}[t]\centering\small
\begin{tabular}{@{}lccc@{}}
\toprule
Base & 7-way ($N$) & age & genus \\
\midrule
SD-1.5 & 0.976 (957) & 0.993 & 0.987 \\
SDXL   & 0.997 (948) & 0.987 & 0.940 \\
FLUX.1-schnell   & 1.000 (807) & 0.978 & 0.998 \\
\bottomrule
\end{tabular}
\caption{$u_1$ subject recovery by base model (logistic regression, 5-fold
CV, macro one-vs-rest AUROC): the seven-category benchmark and the two single-axis (age,
genus) zoos.}
\label{tab:crossbase}
\end{table}

\textbf{The subject signal is distributed throughout the network.}
On SD-1.5, the subject signal is distributed across the cross-attention layers: strong in the
encoder (0.963) and decoder (0.959), weakest at the mid-block (0.783), and best when all
layers are combined (0.976). It is also data-efficient, reaching $\approx$0.97 macro AUROC
by 80 adapters per category. For other models, the subjects are also recoverable throughout the network.

\paragraph{Where the subject signal lies.} \Cref{fig:localization} plots the
per-region localization on SD-1.5 where the mid-block is weakest and aggregating all layers beats any single region, and \Cref{fig:learning} the data-efficiency curve. On FLUX.1-schnell's DiT the signal is instead uniform across blocks ($0.99$--$1.00$ seven-way, $0.94$--$1.00$
for age).

\begin{figure}[t]
\centering
\includegraphics[width=0.92\columnwidth]{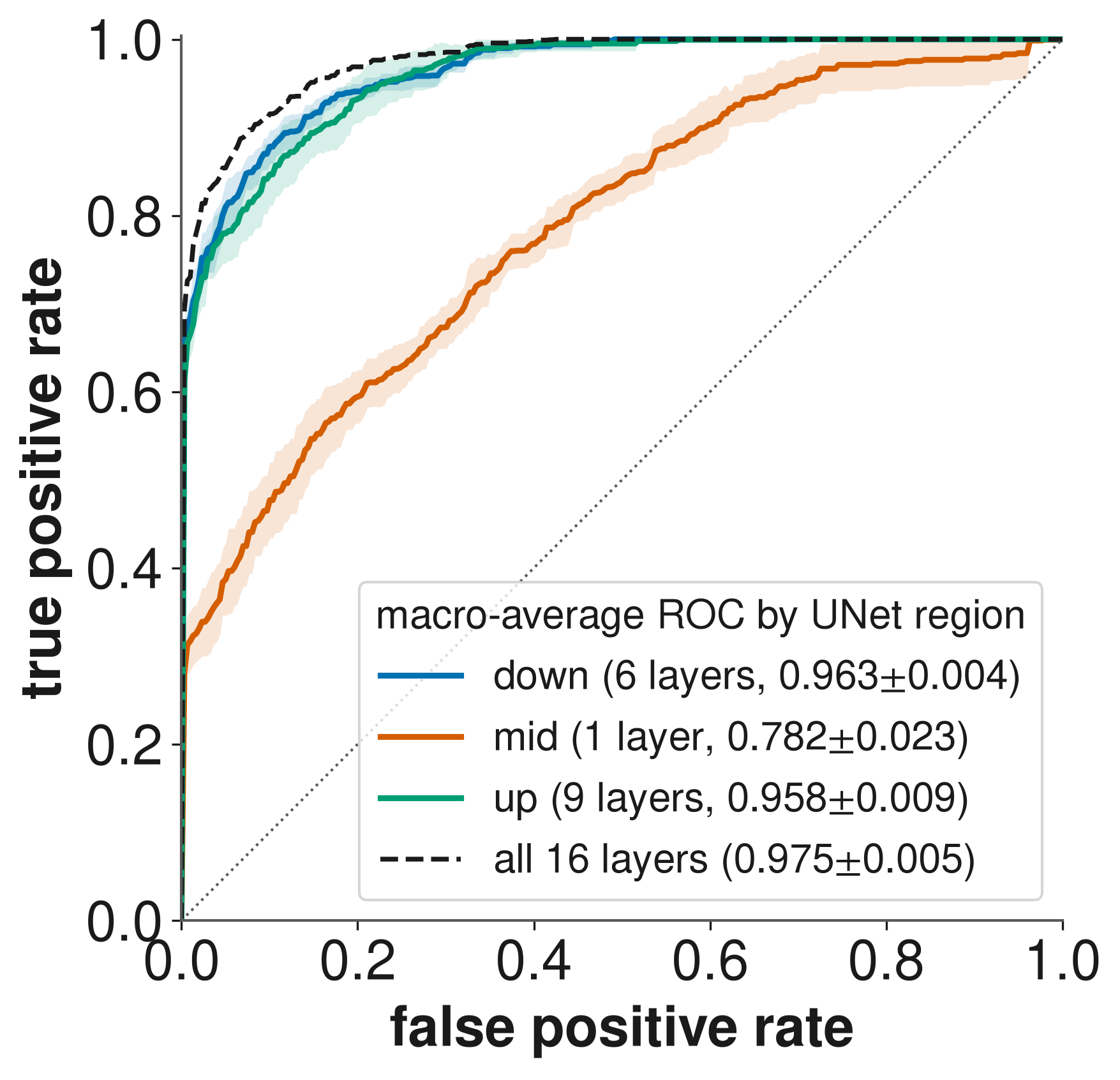}
\caption{Per-U-Net-region macro-average ROC from per-layer $u_1$ for SD-1.5.}
\label{fig:localization}
\end{figure}

\begin{figure}[t]
\centering
\includegraphics[width=0.92\columnwidth]{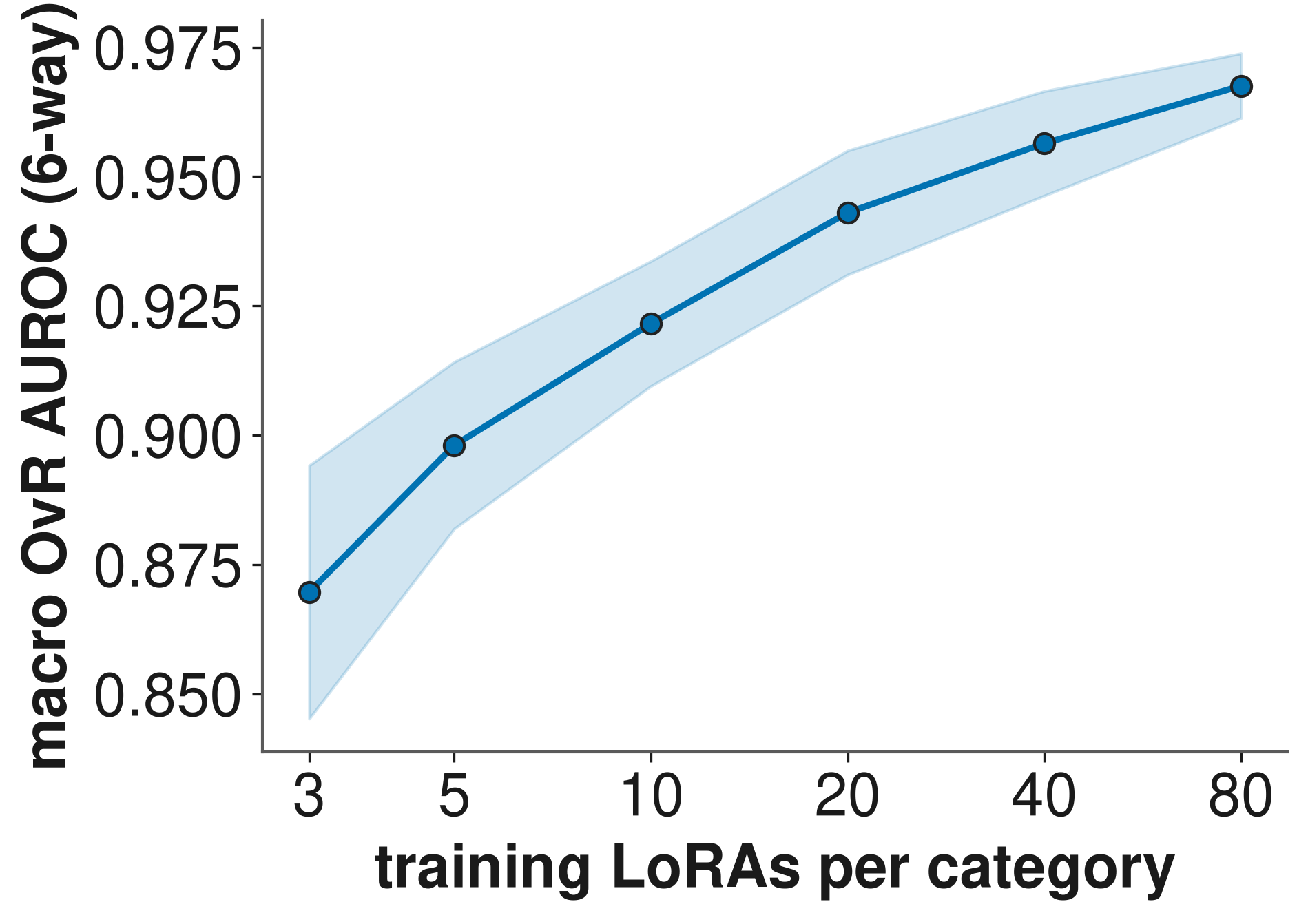}
\caption{Data efficiency: macro one-vs-rest AUROC (the six categories with
$\geq$100 adapters for SD-1.5, fixed held-out test, 12 repeats) against training adapters per
category. The detector reaches $\approx$0.87 from only 3 adapters per category and
saturates near 0.97 by 80.}
\label{fig:learning}
\end{figure}

\paragraph{Attention family and module type.} \Cref{tab:module-ablation}
reports $u_1$ read from a single attention family or module type. On SD-1.5,
cross-attention recovers subject at $0.979$ macro one-vs-rest AUROC and self-attention at
$0.937$ (both together $0.982$, seven-way, $957$ adapters). On SDXL the two families are
near-saturated and comparable (cross $0.997$, self $0.998$; $n{=}948$).

\begin{table}[t]\centering\small
\begin{tabular}{@{}lr@{}}
\toprule
Restriction & macro AUROC \\
\midrule
\emph{Attention family, SD-1.5} (7-way, $n{=}957$) & \\
\quad self-attention                    & 0.937 \\
\quad \textbf{cross-attention (default)} & \textbf{0.979} \\
\quad both                              & 0.982 \\
\midrule
\emph{Attention family, SDXL} (7-way, $n{=}948$) & \\
\quad self-attention                    & 0.998 \\
\quad cross-attention                   & 0.997 \\
\quad both                              & 0.999 \\
\midrule
\emph{Module type, SD-1.5} (4-way content, $n{=}120$) & \\
\quad cross-attention            & 0.998 \\
\quad feed-forward / projection  & 0.998 \\
\quad both                       & 1.000 \\
\bottomrule
\end{tabular}
\caption{$u_1$ read from a single attention family or module type (logistic
regression, 5-fold CV, macro one-vs-rest AUROC)}
\label{tab:module-ablation}
\end{table}

\paragraph{Left vs.\ right singular vector.}\label{app:rightvec}

We compare $u_1$ (the direction the adapter injects into the denoiser) with $v_1$ (the cue side that receives the update) and a concatenation of $u_1$ and $v_1$ on the $957$-adapter SD-1.5 benchmark under the same protocol as
the weight-feature ablation in the main text. $v_1$ alone is less effective in classifying ($0.822$ macro AUROC, $0.38$ balanced accuracy) while the concatenation does not improve on $u_1$ alone ($0.977$ vs $0.979$ AUROC). Therefore, the discriminatory signal lies in the left singular direction.

\paragraph{Representation baselines.} \Cref{fig:baselines-perclass} breaks the baseline
comparison down by category for SD-1.5.

\begin{figure}[t]
\centering
\includegraphics[width=\columnwidth]{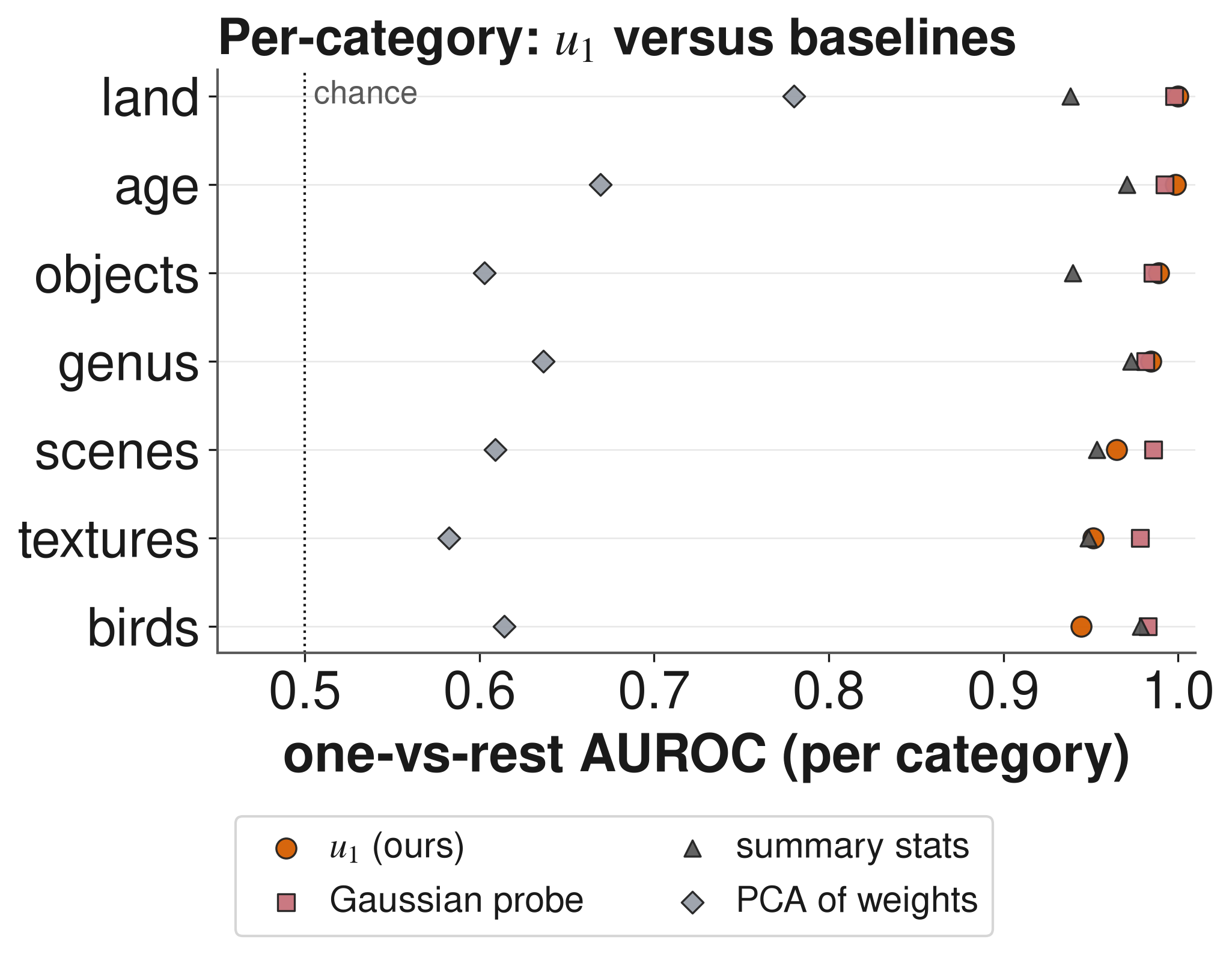}
\caption{Per-category one-vs-rest AUROC for each method for SD-1.5}
\label{fig:baselines-perclass}
\end{figure}

\paragraph{Calibration and geometry.} \Cref{fig:ood} shows the
out-of-distribution calibration behind the main-text abstention result.
\Cref{fig:umap} shows the UMAP projection.

\begin{figure}[t]
\centering
\includegraphics[width=0.92\columnwidth]{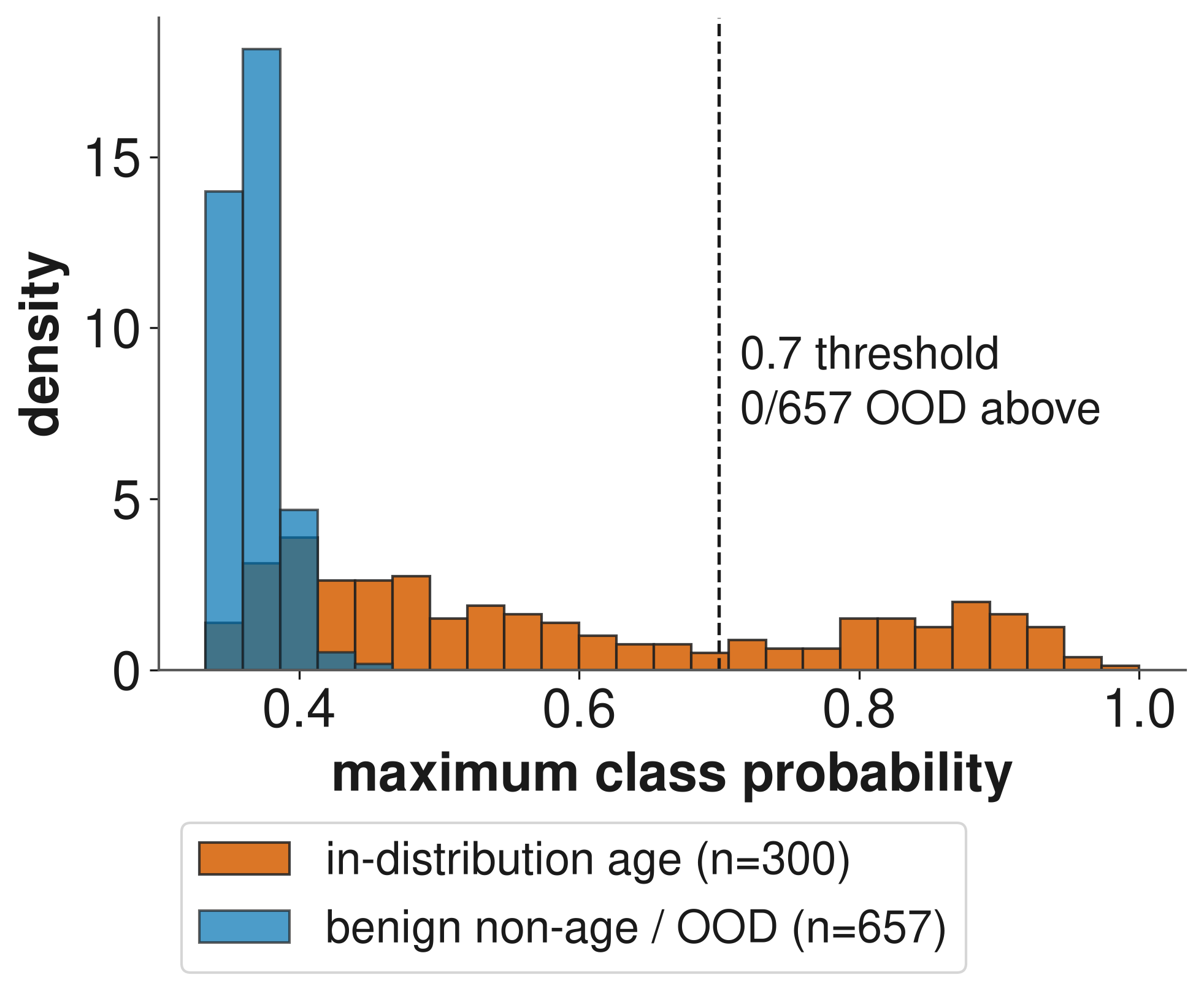}
\caption{Out-of-distribution calibration. An age-proxy detector is confident
in-distribution but abstains on benign non-age content.}
\label{fig:ood}
\end{figure}

\begin{figure}[t]
\centering
\includegraphics[width=\columnwidth]{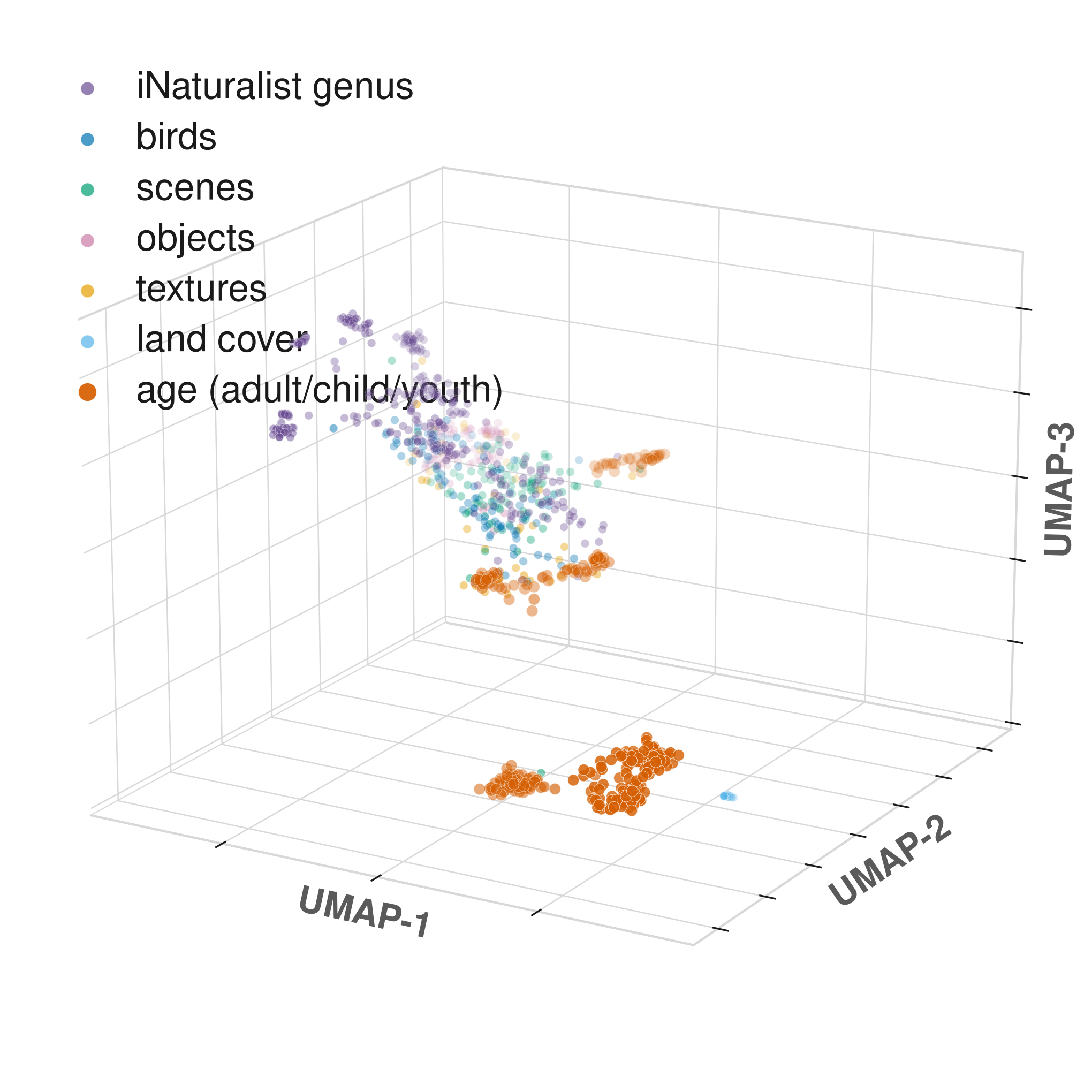}
\caption{3D UMAP of $u_1$ across the SD-1.5 zoos.}
\label{fig:umap}
\end{figure}

\paragraph{Controlling for dataset vs. source.} A natural worry is that we are primarily detecting which dataset a LoRA came from rather than what it depicts, because in the main 7-way benchmark each subject category happens to come from its own dataset, so ``subject'' and ``source'' are confounded. To separate them, we use birds, which are present in two independent sources---CUB-200 and the \emph{melanerpes} (woodpecker) genus in iNaturalist---and train a bird-vs-everything-else detector on birds from only one source, then test whether it recognizes birds from the other, never-seen source (the negatives include the iNaturalist butterfly genera, so the detector cannot succeed by merely spotting ``iNaturalist-ness''). The detector still recovers the unseen source's birds well above chance---$0.77$ AUROC training on CUB and testing on iNaturalist, $0.85$ the reverse---versus $0.957$ when the two bird sources are pooled (\Cref{fig:birds}). So $u_1$ encodes the bird \emph{subject} rather than dataset provenance.

\begin{figure}[t]
    \centering
    \includegraphics[width=0.9\columnwidth]{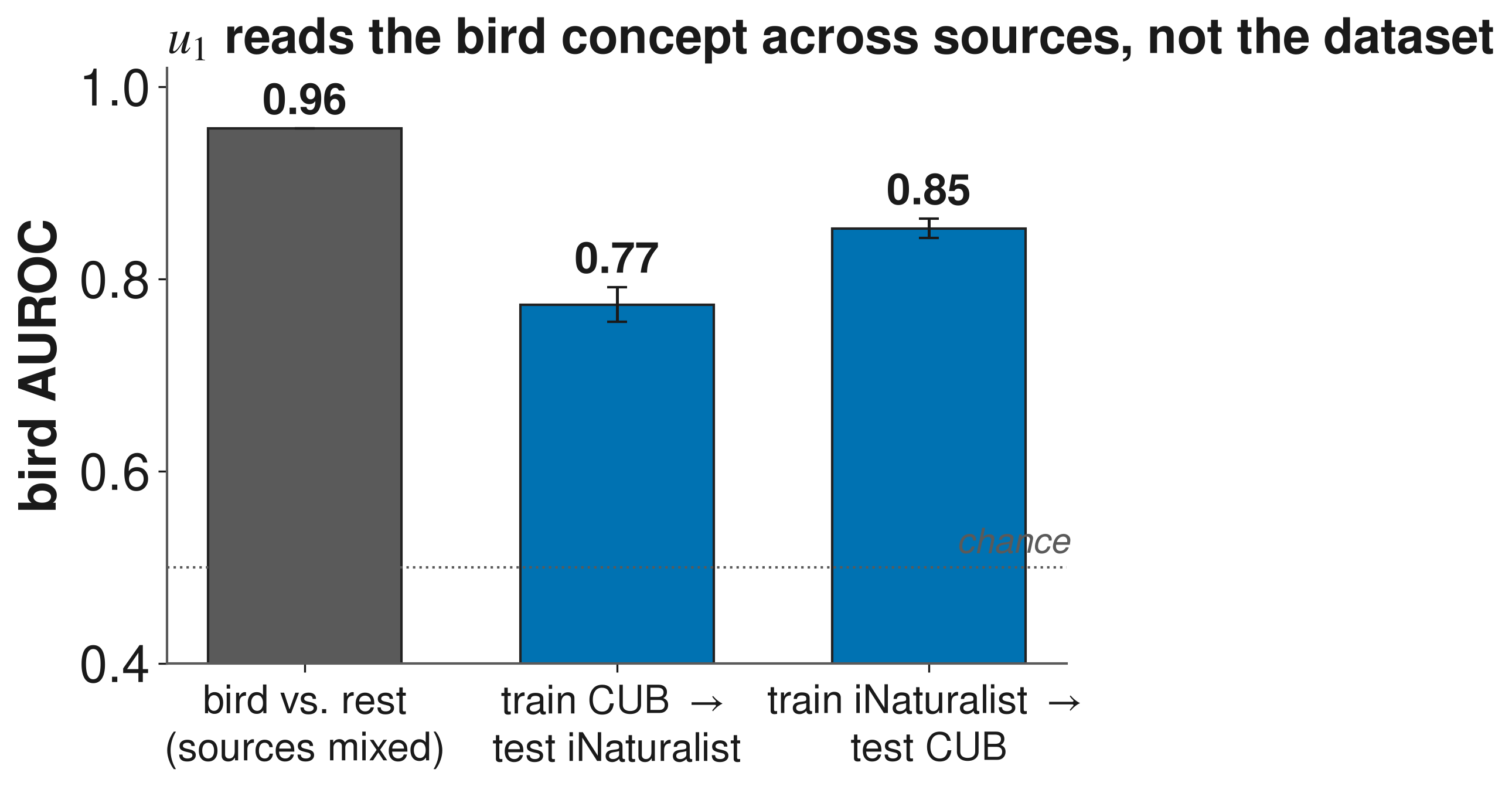}
    \caption{A bird-vs-rest classifier on $u_1$ separates birds from the other $757$ SD-1.5 adapters at $0.957$ AUROC with the two sources (CUB-200 and the iNaturalist \emph{melanerpes} genus) pooled (left), and still recovers the held-out source's birds when trained on only one source (right; CUB$\to$iNat $0.77$, iNat$\to$CUB $0.85$), so $u_1$ encodes the bird subject, not merely dataset provenance. Bars are mean over 5-fold CV, error bars $\pm1$ s.d., dotted line chance.}
    \label{fig:birds}
\end{figure}

\section{Routing and Complementarity Details}
\label{app:routing}

The weight-space ($u_1$) and activation (Gaussian-probe) views are computed for the same adapters; we align them by adapter key and evaluate every method under one shared 5-fold cross-validation. \Cref{tab:routing-full} gives the full results. Average fusion averages the two classifiers' class probabilities, and the confidence-gated router predicts, per adapter, with whichever view assigns the higher maximum softmax probability. The per-category oracle routes each category to its better single view. Both fusion and the confidence router \emph{exceed} it, because they combine the views per adapter rather than per category. The router selects $u_1$ on $52\%$ of adapters, indicating the two views carry comparable, partly independent signal. \Cref{fig:routing-perclass} shows that average fusion sits at or above the better view on every category. We emphasize this is an analysis of complementarity, not a deployed pipeline.

\begin{table}[t]\centering\small
\begin{tabular}{@{}lccl@{}}
\toprule
Method & AUROC & BA & label-free? \\
\midrule
$u_1$ (weights)       & 0.975 & 0.743 & yes, no GPU \\
Gaussian (activations) & 0.986 & 0.816 & yes, GPU \\
confidence-router      & 0.993 & 0.901 & yes \\
fusion (concat)        & 0.993 & 0.879 & yes \\
\textbf{fusion (avg)}  & \textbf{0.995} & \textbf{0.900} & yes \\
\midrule
per-category oracle    & 0.988 & --- & upper bound \\
\bottomrule
\end{tabular}
\caption{Routing/complementarity on the $860$-adapter paired set (shared 5-fold CV). Oracle routes per category: $u_1$ for age, genus, land cover, objects; Gaussian for birds, scenes, textures. Both fusion and the label-free confidence router beat either view and exceed the per-category oracle bound.}
\label{tab:routing-full}
\end{table}

\Cref{fig:baseline} compares $u_1$ and Gaussian probing on the age and genus classification tasks. \Cref{fig:complementarity} and \Cref{fig:routing} give the per-category and macro views of the two combined.

\begin{figure}[t]
\centering
\includegraphics[width=0.92\columnwidth]{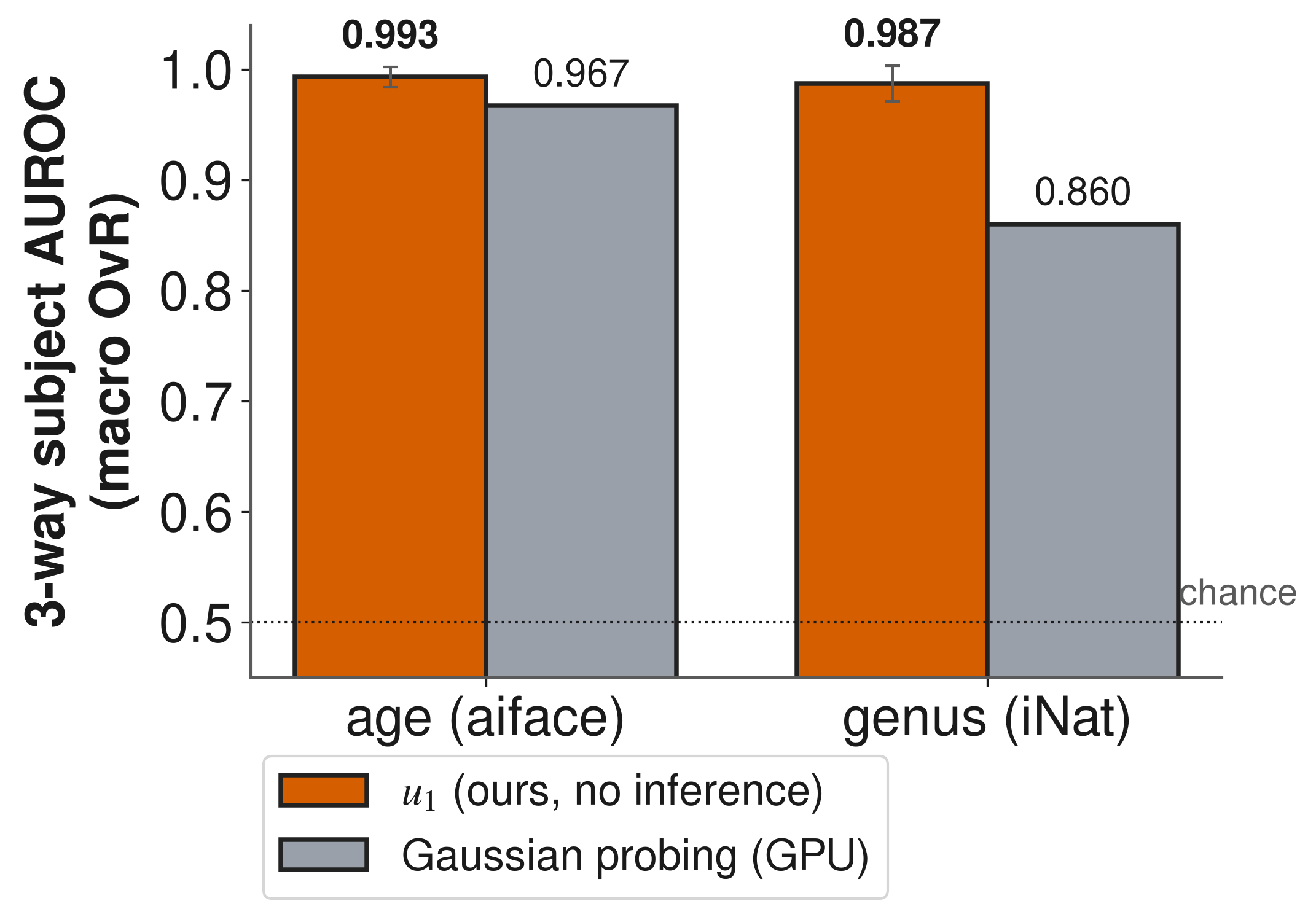}
\caption{Weight-space $u_1$ versus activation Gaussian probing on two controlled zoos. $u_1$ is higher on both zoos and needs no inference. Error bars are CV standard deviation.}
\label{fig:baseline}
\end{figure}
\begin{figure}[t]
\centering
\includegraphics[width=0.92\columnwidth]{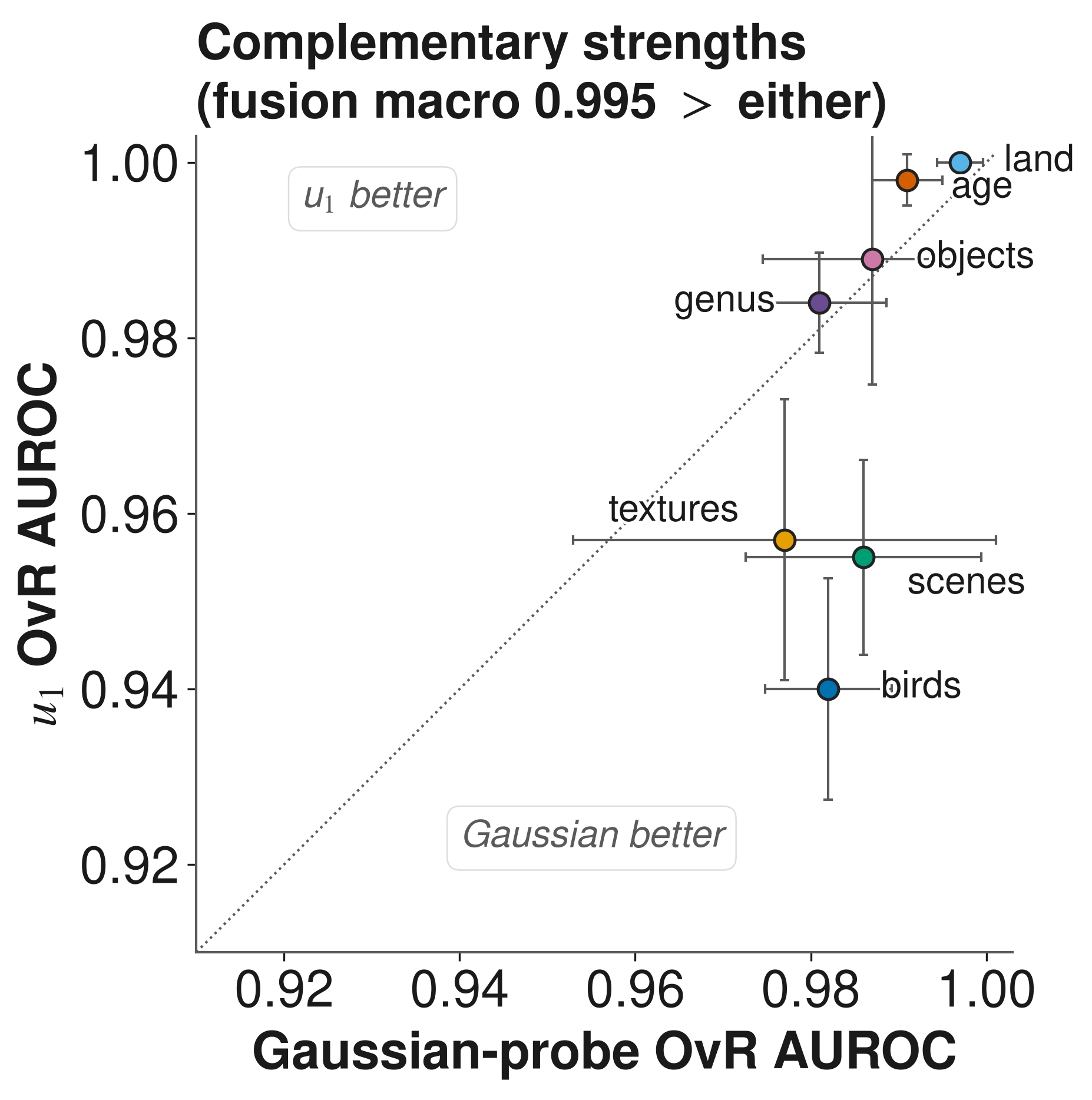}
\caption{Per-category one-vs-rest AUROC, $u_1$ vs Gaussian probing, on the seven-category benchmark. The two are complementary---a concatenation/average fusion reaches 0.993--0.995 macro AUROC, above either method alone.}
\label{fig:complementarity}
\end{figure}
\begin{figure}[t]
\centering
\includegraphics[width=\columnwidth]{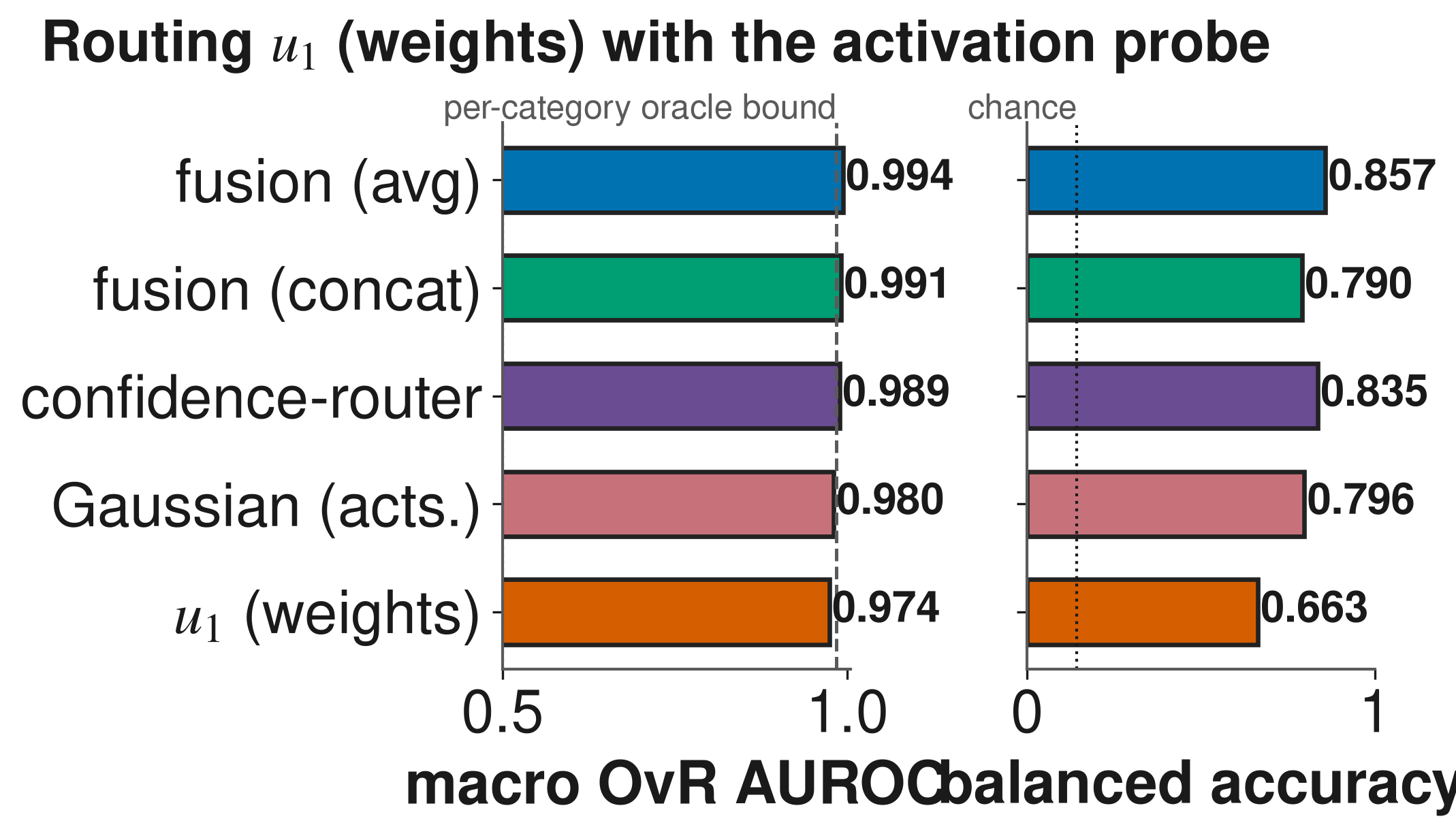}
\caption{Routing $u_1$ (weights) with the Gaussian activation probe on the paired adapter set (macro OvR AUROC and balanced accuracy, shared 5-fold CV).}
\label{fig:routing}
\end{figure}

\begin{figure}[t]
\centering
\includegraphics[width=\columnwidth]{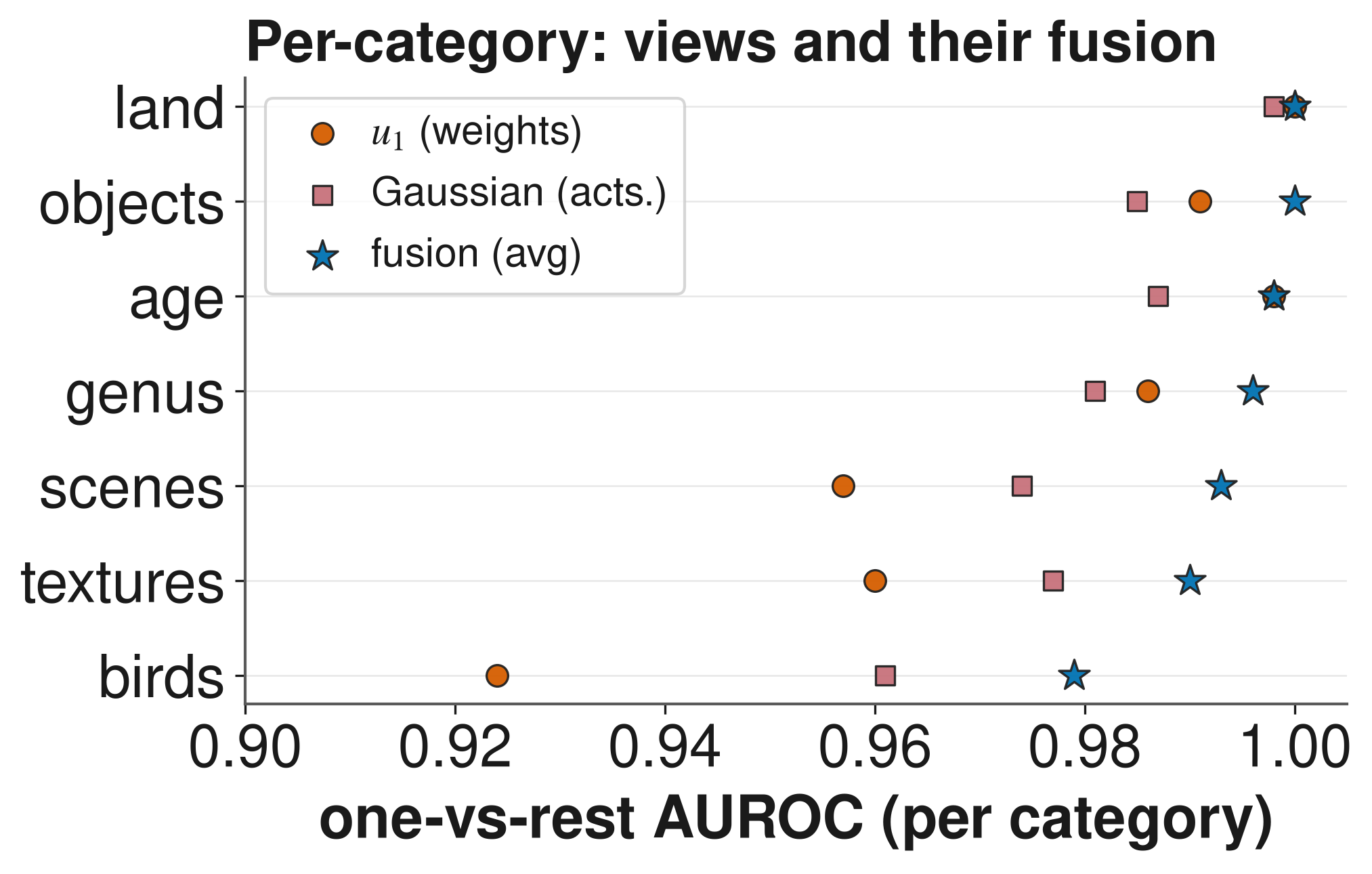}
\caption{Per-category one-vs-rest AUROC for the two views and their average fusion on the paired set. Fusion sits at or above the better view on every category; the views trade off (e.g.\ $u_1$ stronger on age/objects, Gaussian on birds/scenes/textures).}
\label{fig:routing-perclass}
\end{figure}

\end{document}